\def\figdir{Figs/}
\theoremstyle{plain}
\newtheorem{theorem}{Theorem}[section]
\theoremstyle{definition}
\theoremstyle{remark}
\def\figdir{Figs/}
\newcommand{\req}[1]{Eq.~(\ref{#1})}
\newcommand{\rfig}[1]{Fig.~\ref{#1}}
\newcommand{\rtab}[1]{Tab.~\ref{#1}}
\newcommand{\x}{\bm{\mathrm{x}}}
\newcommand{\y}{\bm{\mathrm{y}}}
\DeclareMathOperator*{\argmax}{arg\,max}
\DeclareMathOperator*{\argmin}{arg\,min}
\newtheorem*{theorem*}{Theorem}
\title{Test-time reward-guided alignment of language models \\by
importance sampling on pre-logit space}
\author{%
  Sekitoshi Kanai\\
  NTT, Inc.\\
  Tokyo, Japan\\
  \texttt{sekitoshi.kanai@ntt.com} \\
  \And
  Tsukasa Yoshida\\
  NTT, Inc.; Toyohashi University of Technology\\
  Tokyo, Japan; Aichi, Japan\\
   \And
   Hiroshi Takahashi \\
  NTT, Inc.\\
  Tokyo, Japan\\
   \And
   Haru Kuroki \\
   The University of Osaka\\
   Osaka, Japan \\
   \And
   Kazumune Hashimoto \\
   The University of Osaka\\
   Osaka, Japan \\
  }
\begin{document}

\maketitle

\begin{abstract}
Test-time alignment of large language models (LLMs)
attracts attention because fine-tuning of LLMs requires high computational costs.
In this paper, we propose 
a new test-time reward-guided alignment method called
adaptive importance sampling on pre-logits (AISP) on the basis of the sampling-based model predictive control
with the stochastic control input.
AISP applies the Gaussian perturbation into pre-logits, which are outputs of the penultimate layer,
so as to maximize expected rewards with respect to the mean of the perturbation.
We demonstrate that the optimal mean is obtained by importance sampling with sampled rewards. 
AISP outperforms best-of-n sampling in terms of rewards over the number of used samples
and achieves higher rewards than other reward-based test-time alignment methods.\looseness=-1
\end{abstract}

\section{Introduction}
Alignment of large language models (LLMs) is a vital technique 
to enable the safe and widespread use of LLMs in real-world applications.
A promising alignment method is reinforcement learning from human feedback (RLHF)~\citep{ouyang2022training,christiano2017deep,ziegler2019fine,bai2022training}.
However, RLHF imposes a heavy computational burden
since fine-tuning LLMs requires high computational costs~\citep{rafailov2023direct,kong2024aligning,hu2022lora}.
To address this, test-time (also known as inference-time and decoding-time) alignment attracts attention~\citep{kong2024aligning,li2024cascade,snell2024scaling,huang2025best,li2024rain}.

Test-time alignment aligns LLMs with human preference without updating parameters of LLMs.
This paper focuses on test-time reward-guided alignment methods that
find the optimal responses in terms of maximizing the score of a given reward.
To this goal, best-of-n sampling (BoN) is a simple but effective method,
which selects the response that achieves the highest reward values 
from $N$ generated responses from the base LLMs~\citep{snell2024scaling,lightman2023let,brown2024large,sessa2025bond}.
Though BoN can asymptotically optimize the same objective function as KL-constrained reinforcement learning (RL)~\citep{yang2024asymptotics},
there might be room for improvements, such as in sample efficiency, because it does not actively explore the optimal responses.
As another line of research, \citet{kong2024aligning} formalized test-time alignment as the optimal control problem and proposed RE-Control inspired by control theory.
RE-Control applies an external control signal to the representations of LLMs and optimizes these input trajectories.
Though RE-Control can actively explore the optimal responses by the control input,
it needs to train a value function using a reward model: i.e., it requires computation and storage costs for training including dataset collection.
\textit{Can LLMs be controlled by the training-free methods to explore the optimal response?}

In this paper, we propose a new test-time alignment inspired by sampling-based control methods without a training process.
Traditional optimal control theory can optimize input trajectories without any training process by solving differential equations such as the Pontryagin’s maximum principle. 
However, these methods are not applicable to LLM alignment because LLMs are nonlinear, complicated and large-scale systems~\citep{chen2024pid}. 
For such systems, sampling-based model predictive control has been advanced by leveraging the parallel computing capabilities of GPUs~\citep{williams2018information,williams2017model}.
Therefore, we adopt a sampling-based optimal control for LLM alignment by operating in the continuous pre-logit trajectory space.
First, we formulate test-time alignment as a KL-constrained stochastic control problem over pre-logit trajectories.
This formulation yields a reward-tilted optimal distribution through the certain free-energy.
Though this optimal distribution provides an ideal target for alignment, this optimal distribution is intractable because it depends on the reward of generated responses and its normalization constant cannot be computed directly.
To obtain a practical training-free algorithm, we introduce a tractable Gaussian proposal family over pre-logit perturbation trajectories and approximate the optimal distribution by importance sampling. 
This choice enables closed-form density ratios and weighting functions while preserving the ability to actively explore high-reward responses.
The mean of the Gaussian proposal converges to the first moment of the optimal distribution via importance sampling.
 We iteratively update this mean by adaptive importance sampling~\citep{kloek1978bayesian,cappe2004population,bugallo2017adaptive} 
 because na\"ive importance sampling can require a large number of effective samples due to the vast pre-logit sequence space. 
Therefore, our method is called adaptive importance sampling on pre-logits (AISP, \rfig{Fig:illust}).
After explanation of AISP,
we discuss why a Gaussian proposal is a practical choice in pre-logit space, including its connection to the last softmax layer.
Additionally, we reveal that AISP becomes equivalent to BoN with the specific sampling strategy in the limit of a hyperparameter.
Experiments demonstrate that AISP increases reward values 
faster than BoN in terms of the number of used generated samples.
Additionally, AISP also outperforms RE-Control even though it does not require training dataset collection in advance.
Since AISP requires fewer samples than BoN, 
we also evaluate Batched AISP, which simultaneously handles multiple prompts with small samples,
and confirm that Batched AISP can outperform BoN under the same iterations.

\begin{figure*}[tb]
    \centering
    \includegraphics[width=.75\linewidth]{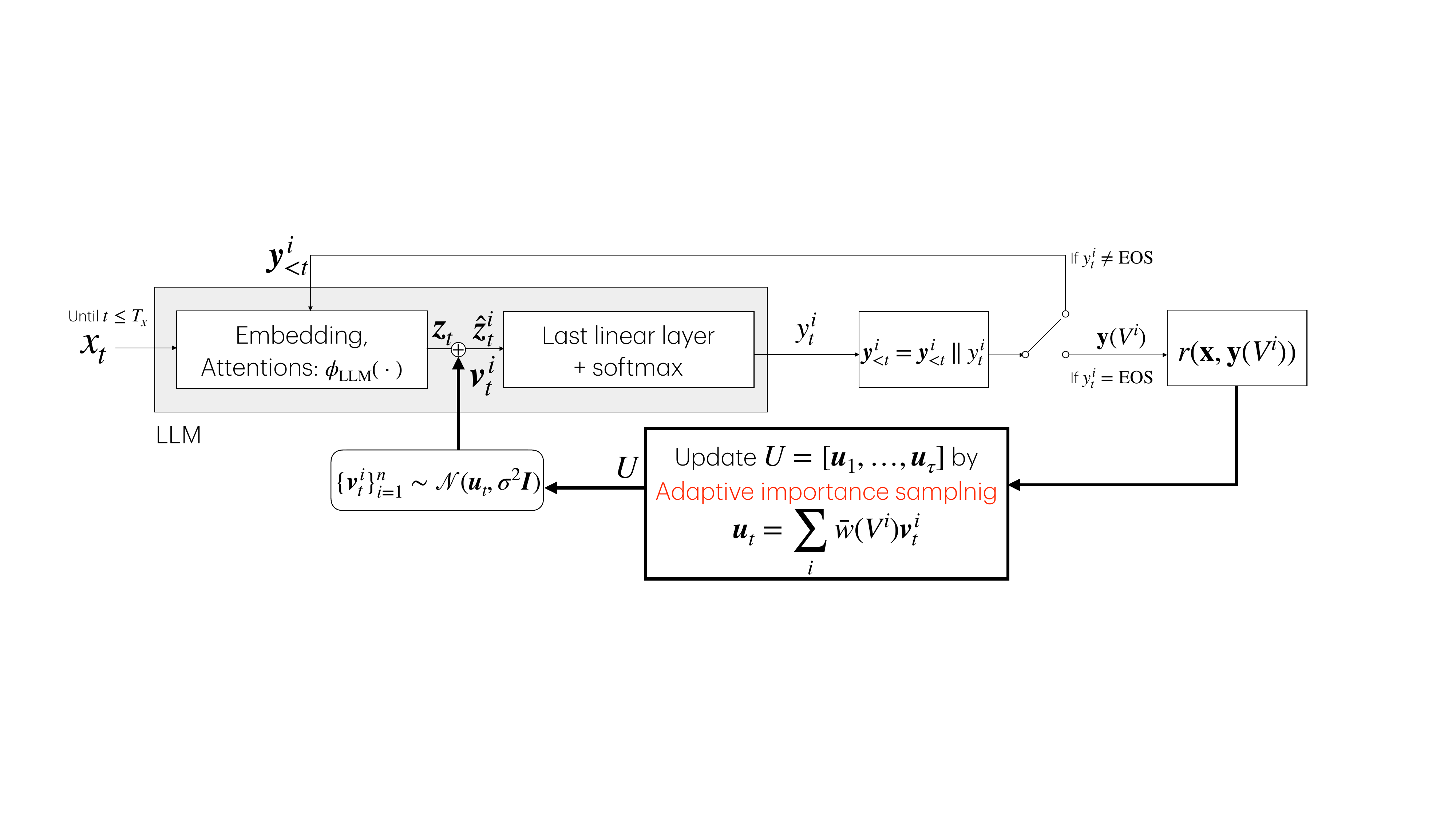}
    \caption{Illustration of AISP. $n$ input trajectries $\{\{\bm{v}^i_t\}_{t=1}^{\tau}\}_{i=1}^{n}$ are sampled from $\mathcal{N}(\bm{u}_t, \sigma^2 \bm{I})$. The input $\bm{v}^i_t$ is added to the pre-logit $\bm{z}_t$, which is obtained by applying LLMs to the past tokens $\bm{y}^i_{<t}$.
    The $t$-th token $y_t^i$ is sampled and concatenated with the past tokens $\bm{y}^i_{<t}$.
     When $y^i_t$ is the end-of-sequence token, the rewards of $\{\mathbf{y}(V^i)\}^{n}_{i=1}$ are evaluated and used in adaptive importance sampling for $\bm{u}_t$.}
    \label{Fig:illust}
\end{figure*}

\section{Preliminary}
\subsection{Best-of-N sampling}
Let $x_t, y_t\in\mathcal{V}$ denote tokens in a vocabulary space $\mathcal{V}$ at the $t$-th position.
Given an input prompt $\x=[x_1,\dots,x_{T_x}]$, an LLM generates a response $\y=[y_1,\dots,y_{T_y}]$ from
the probability $P_{\mathrm{LLM}}(\cdot|\x)$.
Best-of-N sampling (BoN) attempts to generate aligned responses based on a given reward model $r(\x,\y)\in\mathbb{R}$.
BoN samples $N$ responses from the base LLM as $\y\sim P_{\mathrm{LLM}}(\cdot|\x)$ and constructs a set $\mathcal{Y}_N=[\y^1,\dots,\y^N]$.
Next, BoN selects the best sample from the set $\mathcal{Y}_N$ as 
\begin{align}
    \y_{\mathrm{BoN}}=\argmax_{\y\in \mathcal{Y}_N} r(\x,\y).
\end{align}
This simple algorithm is an effective and popular alignment method~\citep{lightman2023let,snell2024scaling,sessa2025bond}.
\citet{yang2024asymptotics} have shown that BoN asymptotically optimizes the following objective function of KL-constrained RL:
\begin{align}
    \max_{\pi(\cdot|\x)} \mathbb{E}_{\y\sim \pi(\cdot|\x)} r(\x,\y)-\lambda D_{KL}(\pi(\cdot|\x)|P_{\mathrm{LLM}}(\cdot|\x))\label{RLObj}.
\end{align}
The KL divergence $D_{KL}(\pi(\cdot|\x)|P_{\mathrm{LLM}}(\cdot|\x))$ prevents $\pi(\cdot|\x)$ from moving far away from the base LLM $P_{\mathrm{LLM}}(\cdot|\x)$.
Equation~(\ref{RLObj}) has a closed solution~\citep{beirami2024theoretical,korbak2022rl,go2023aligning}: 
\begin{align}\textstyle
    \pi^*(\y|\x)=\frac{1}{\eta }P_{\mathrm{LLM}}(\y|\x)\mathrm{exp}(\frac{1}{\lambda}r(\x,\y)),\label{OptEn}
\end{align}
where $\eta\!=\!\sum_{\y} P_{\mathrm{LLM}}(\y|\x)\mathrm{exp}(\frac{1}{\lambda}r(\x,\y))$.
$\eta$ is a normalization constant,
which is hard to estimate~\citep{rafailov2023direct}.
\subsection{RE-Control}
\citet{kong2024aligning} have formulated the LLM alignment as the optimal control problem where the control input $\bm{u}_t\!\in\!\mathbb{R}^d$ is added to the representation of an auto-regressive LLM as
\begin{align}
    y_t\sim
        \mathrm{softmax}(\bm{W}_{\mathrm{LLM}}(\bm{z}_t+\bm{u}_t)+\bm{b}_{\mathrm{LLM}}).\label{RE-Control}
\end{align}
where $\bm{W}_{\mathrm{LLM}}$ and $\bm{b}_{\mathrm{LLM}}$ are the parameters of the last linear layer of the LLM.
$\bm{z}_t\in\mathbb{R}^d$ is called \textit{pre-logit}\footnote{Though $\bm{z}_t$ is called logit in \citep{kong2024aligning}, we call it pre-logit to distinguish it from the input to the softmax function: $\bm{W}\bm{z}_t\!+\!\bm{b}$.}, which 
is the output vector of the penultimate layer of the LLM: $\bm{z}_t\!=\!\phi_{\mathrm{LLM}}(\bm{y}_{<t})$
where $\bm{y}_{<t}\!=\![y_0,\dots,y_{t-1}]$ is a past token sequence including the input prompt $\mathbf{x}$. 
$\phi_{\mathrm{LLM}}(\cdot)$ contains an embedding layer and attention layers. 
In this formulation, $\bm{u}_t$ is optimized through the gradient ascent to maximize the value function $V(\bm{z}_t)$,
which evaluates the current state of LLMs to maximize rewards at the terminal.
However, this value function needs to be trained on the dataset,
which is composed of various states, responses, and rewards: $D_V\!=\!\{(\bm{z}_{0:T}^i,\mathbf{y}^i,r(\mathbf{x}^i,\mathbf{y}^i) )\}_{i=1}^M$.
In fact, the study of \citep{kong2024aligning} uses 349,000 prompts in SHP~\citep{pmlr-v162-ethayarajh22a}
to collect them, which incurs storage costs and training time.\looseness=-1
\section{Proposed method: AISP}
To maximize rewards, we consider applying the control theory to LLM alignment similar to \cite{kong2024aligning}, but without training. 
Traditional optimal control methods do not require any training process because the optimal input trajectories are derived by solving differential equations such as the Pontryagin’s maximum principle. 
However, such methods are ineffective for LLM alignment because LLMs are nonlinear large-scale systems~\citep{chen2024pid}.
For such systems, recent optimal control methods have incorporated a sampling-based approach with model predictive control by considering stochastic input.
Thus, we adopt the stochastic optimal control method 
called model predictive path integral control (MPPI)~\citep{williams2018information,williams2017model} to LLM alignment.
First, we formalize our problem and explain the closed solution.
Since it is an intractable distribution,
we present adaptive importance sampling to solve this problem.
Next, we discuss the connection between the assumption in pre-logit and softmax function,
and the connection with BoN.
Finally, we explain the details of implementation.
\subsection{Problem formulation and optimal distribution in pre-logit space}
Whereas RE-Control~(\req{RE-Control}) uses  the deterministic input $\bm{u}_t$, we apply the stochastic control input $\bm{v}_t$
to LLMs and optimize the distribution of $\bm{v}_t$.
Specifically, we inject a  noise $\bm{v}_t\!\in\!\mathbb{R}^d$ to pre-logit $\bm{z}_t\!=\!\phi_{\mathrm{LLM}}(\bm{y}_{<t})$ for the time interval $t\!\in\![1,\tau]$
where $\tau$ is practically set to maximum generation length.
The input prompt $\mathbf{x}$ corresponds to $\bm{y}_{<1}$.
Then, the $t$-th token is given by
\begin{align}
    \!y_t&\!=\!
        \argmax_i\!\left[\mathrm{softmax}(\bm{W}_{\mathrm{LLM}}(\bm{z}_{t}+\bm{v}_t)+\bm{b}_{\mathrm{LLM}})\right]_i.    \label{sampy}
\end{align}
We consider the distribution of the input trajectory $V\!=\![\bm{v}_1,\dots,\bm{v}_{\tau}]\!\in\!\mathbb{R}^{d\times \tau}$
following the distribution $\mathbb{Q}$ with the density function $q(V)$.
Similar to the objective of KL-constrained RL in \req{RLObj},
we optimize the expected reward values with the KL constraint as 
\begin{align}\label{ObjF}
    \min_{\mathbb{Q}} J(\x,\mathbb{Q})
    =\min_{\mathbb{Q}}-\mathbb{E}_{V\sim \mathbb{Q}}\!&\left[r(\x,\y(V))\right]+\lambda\mathbb{D}_{\mathrm{KL}}(\mathbb{Q}|\mathbb{P}),
\end{align}
where $\y(V)\!=\![y_1,\dots,y_{T_y}]$ is a response generated by \req{sampy}.
$\lambda\mathbb{D}_{\mathrm{KL}}(\mathbb{Q}|\mathbb{P})$
is the regularization term so that the resulting distribution does not deviate from the base LLM
where $\lambda\!>\!0$ is a hyper-parameter and $\mathbb{P}$ is a reference distribution with the density function $p(V)$ for this purpose.
By using the optimal $\mathbb{Q}^*$, we sample the response $\y(V)$ for $V\sim\mathbb{Q}^*$.

To optimize \req{ObjF}, we consider the following free energy~\citep{williams2018information,williams2017model}:
\begin{align}\label{FreeEq}\textstyle
    F(r,p,\x,\lambda)=\mathrm{log}\left( \mathbb{E}_{V\sim\mathbb{P}}\left[ \mathrm{exp}\left(\frac{1}{\lambda} r(\x,\y(V)) \right)\right]\right).
\end{align}
By using Jensen's inequality, we have the following result:
\begin{theorem}
    Free energy in \req{FreeEq} satisfies $-\lambda F(r,p,\x,\lambda) \leq J (\x,\mathbb{Q})$    and the equality holds if\looseness=-1
    \begin{align}\label{OptQ}\textstyle
        q^*(V)=\frac{1}{\eta}\mathrm{exp}\left(\frac{1}{\lambda} r(\x,\y(V))\right)p(V),
    \end{align}
    where $\eta$ is a normalization constant given by
        $\eta=\int_{\mathbb{R}^{d\times \tau}}\mathrm{exp}\left(\frac{1}{\lambda} r(\x,\y(V))\right)p(V)dV$.
\end{theorem}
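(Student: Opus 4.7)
The plan is to prove the inequality by the standard change-of-measure plus Jensen argument (the Donsker--Varadhan / log-sum-exp variational identity), and then verify the equality condition by direct substitution.

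\textbf{Step 1: rewrite $F$ as an expectation under $\mathbb{Q}_{U,\sigma^2}$.} Since both $p$ and $q$ are strictly positive Gaussian densities on $\mathbb{R}^{d\times\tau}$, $\mathbb{Q}_{U,\sigma^2}$ is absolutely continuous with respect to $\mathbb{P}$ with Radon--Nikodym derivative $p(V)/q(V\mid U,\sigma^2)$. Therefore
\[
\mathbb{E}_{V\sim\mathbb{P}}\!\left[\exp\!\Big(\tfrac{1}{\lambda}r(\x,\y(V))\Big)\right]
=\mathbb{E}_{V\sim\mathbb{Q}_{U,\sigma^2}}\!\left[\exp\!\Big(\tfrac{1}{\lambda}r(\x,\y(V))\Big)\,\frac{p(V)}{q(V\mid U,\sigma^2)}\right].
\]

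\textbf{Step 2: apply Jensen to $\log$.} Because $\log$ is concave, Jensen's inequality gives
\[
\log\mathbb{E}_{V\sim\mathbb{Q}_{U,\sigma^2}}\!\left[\exp\!\Big(\tfrac{1}{\lambda}r\Big)\frac{p}{q}\right]
\ge \mathbb{E}_{V\sim\mathbb{Q}_{U,\sigma^2}}\!\left[\tfrac{1}{\lambda}r(\x,\y(V))+\log\frac{p(V)}{q(V\mid U,\sigma^2)}\right].
\]
Multiplying by $-\lambda<0$ flips the direction and yields
\[
-\lambda F(r,p,\x,\lambda)\le -\mathbb{E}_{V\sim\mathbb{Q}_{U,\sigma^2}}[r(\x,\y(V))]+\lambda\,\mathbb{E}_{V\sim\mathbb{Q}_{U,\sigma^2}}\!\left[\log\frac{q(V\mid U,\sigma^2)}{p(V)}\right].
\]
The second term is exactly $\lambda\,\mathbb{D}_{\mathrm{KL}}(\mathbb{Q}_{U,\sigma^2}\,\|\,\mathbb{P})$, so the right-hand side equals $J(\x,U)$, establishing $-\lambda F(r,p,\x,\lambda)\le J(\x,U)$.

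\textbf{Step 3: equality condition.} Jensen's inequality applied to $\log$ is tight iff the integrand $\exp(\tfrac{1}{\lambda}r(\x,\y(V)))\,p(V)/q(V)$ is $\mathbb{Q}$-a.s.\ constant in $V$. Setting $q(V)=q^*(V)=\eta^{-1}\exp(\tfrac{1}{\lambda}r(\x,\y(V)))\,p(V)$ makes this ratio identically equal to $\eta$, so Jensen is tight. Substituting $q^*$ back into the right-hand side (or equivalently computing $\mathbb{E}_{q^*}[r]-\lambda\mathbb{E}_{q^*}[\log(q^*/p)]=\lambda\log\eta=\lambda F$) confirms the equality $-\lambda F(r,p,\x,\lambda)=J(\x,U)$ when the controlled density coincides with $q^*$.

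\textbf{Main obstacle.} The calculation itself is routine; the only delicate point is sign bookkeeping (the $-\lambda$ factor flipping the inequality) and being explicit that $q^*$ is not required to live in the Gaussian family $\{q(\cdot\mid U,\sigma^2)\}$---the theorem characterizes when the variational lower bound is attained over \emph{all} densities, not when the parametric optimum over $U$ matches it. I would state this caveat briefly so the reader understands that the subsequent importance-sampling step is exactly the device used to approximate the generally non-Gaussian $q^*$ within the tractable Gaussian family.
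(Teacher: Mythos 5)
Your proposal is correct and follows essentially the same route as the paper's proof: the same change of measure from $\mathbb{P}$ to $\mathbb{Q}_{U,\sigma^2}$, the same application of Jensen's inequality to $\log$, and the same multiplication by $-\lambda$ to flip the bound. The only cosmetic difference is in the equality step, where you invoke the tightness condition of Jensen (the integrand being $\mathbb{Q}$-a.s.\ constant, equal to $\eta$) while the paper verifies equality by directly substituting $q^*$ into the KL term; both verifications are equivalent, and your closing caveat that $q^*$ need not lie in the Gaussian family accurately reflects why the paper then resorts to importance sampling.
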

All of the proofs can be found in Appendix~\ref{sec:proof}.
This theorem shows that the free energy in \req{FreeEq}
is the lower bound of the objective function in \req{ObjF},
and the optimal density function is given by \req{OptQ}. 
However, $q^*(V)$ is intractable and difficult to obtain directly
due to the integral in $\eta$, dependence on rewards, and 
the high-dimensionality of the trajectories $V$.
Therefore, we show how to approximate it by using a tractable distribution and importance sampling.

Note that while the similar result (\req{OptEn}) has been presented for the distribution of \textit{discrete} responses,
\req{OptQ} is related to the distribution of \textit{continuous} pre-logits.
This difference enables to use a tractable approximation and to derive a simple algorithm
as described in the next subsection.

\subsection{Gaussian approximation via adaptive importance sampling}
Since the optimal distribution $q^*(V)$ is intractable,
we approximate it by a tractable proposal distribution via importance sampling, 
which is a commonly used technique in the optimal control method called model predictive path integral control (MPPI)~\citep{williams2018information,williams2017model}.
Specifically, we approximate $q^*(V)$ by a fixed-variance Gaussian distribution and optimize its mean $U$.
First, we consider that $\bm{v}_t$ follows $\mathcal{N}(\bm{u}_t,\sigma^2\bm{I})$
where $\sigma^{2}\!\in\!\mathbb{R}$ is a fixed variance. 
This can be interpreted as the distribution of the pre-logit $\hat{\bm{z}}_t$ is given by $p(\hat{\bm{z}}_t|\bm{y}_{<t})\!=\!\mathcal{N}(\phi_{\mathrm{LLM}}(\bm{y}_{<t})+\bm{u}_t,\sigma^2\bm{I})$ for $t\!\in\![1,\tau]$.
The distribution of the input trajectory $V\!=\![\bm{v}_1,\dots,\bm{v}_{\tau}]\!\in\!\mathbb{R}^{d\times \tau}$
 is a joint Gaussian distribution: \looseness=-1
\begin{align}\label{qDef}\textstyle
    q(V|U,\!\sigma^2)\!=\!\frac{1}{(2\pi\sigma^2)^{\frac{d\tau}{2}}}\mathrm{exp}\!\left(\!-\frac{\sum_{t=1}^{\tau}(\bm{v}_t-\bm{u}_{t})^{\top}(\bm{v}_t-\bm{u}_{t})}{2\sigma^2}\!\right)\!,
\end{align}
where $U=[\bm{u}_1,\dots,\bm{u}_{\tau}]\in\mathbb{R}^{d\times \tau}$ is the mean of the input trajectory.
Additionally, we choose the reference distribution
$\mathbb{P}$ as a zero-mean Gaussian distribution: 
\begin{align}\label{pDef}
    \textstyle p(V|\bm{0},\sigma^2)=\frac{1}{(2\pi\sigma^2)^{\frac{d\tau}{2}}}\mathrm{exp}\left(-\frac{1}{2\sigma^2}\sum_{t=1}^{\tau}\bm{v}_t^{\top}\bm{v}_t\right).
\end{align}
$\mathbb{P}$
 serves as a tractable reference distribution that anchors the controlled generation around the base LLM:
 the KL regularizer penalizes large deviations from the unperturbed pre-logit trajectory.
This mitigates reward hacking problem.
Let $\mathbb{Q}_{U,\sigma^2}$ be the distribution corresponding to the density function $q(V|U,\sigma^2)$.
The KL-divergence $\mathbb{D}_{\mathrm{KL}}(\mathbb{Q}_{U,\sigma^2}|\mathbb{P})$ is given by
$\mathbb{D}_{\mathrm{KL}}(\mathbb{Q}_{U,\sigma^2}|\mathbb{P})\!=\!1/2\sigma^2\sum_{t=1}^{\tau}\!\bm{u}^{\top}_t\bm{u}_t$. 

Then, we consider to approximate the optimal distribution of \req{OptQ} by \req{qDef}
through importance sampling~\citep{robert1999monte,kloek1978bayesian,cappe2004population,bugallo2017adaptive}.
Let $\mathbb{Q}^*$ be the distribution corresponding to $q^*(V)$.
We recall the following theorem, which was established in \cite{williams2018information}:
\begin{theorem}{\citep{williams2018information}}\label{Them:IS}
The KL divergence $\mathbb{D}_{\mathrm{KL}}(\mathbb{Q}^*|\mathbb{Q}_{U,\sigma^2})$
is minimized by $U^*\!=\![\bm{u}^*_1,\dots,\bm{u}^*_\tau]$ where 
\begin{align}\label{OptMean}\textstyle
    \bm{u}^*_t=\mathbb{E}_{V\sim\mathbb{Q}^*}[\bm{v}_t].
\end{align}
Let $q(V|\hat{U},\sigma^2)$ and $\mathbb{Q}_{\hat{U},\sigma^2}$ be a proposal density function for importance sampling and the corresponding distribution, respectively.
Equation~(\ref{OptMean}) is re-written as $\mathbb{E}_{V\sim\mathbb{Q}^*}[\bm{v}_t]=\mathbb{E}_{V\sim \mathbb{Q}_{\hat{U},\sigma^2}}[w(V)\bm{v}_t]$,
where $w(V)$ is the weight function given by\looseness=-1
\begin{align}\textstyle
    w(V)
    \!=\!\frac{1}{\eta}\mathrm{exp}\!\left(\frac{1}{\lambda} r(\x,\y(V))\!-\!\frac{1}{\sigma^2}\!\sum_{t=1}^{\tau}\hat{\bm{u}}_t^{\top}\bm{v}_t\!+\!\frac{1}{2\sigma^2}\sum_{t=1}^{\tau}\hat{\bm{u}}_t^{\top}\hat{\bm{u}}_t\! \right)\!.\label{Rweight}
\end{align}
\end{theorem}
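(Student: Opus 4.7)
The plan is to prove the two parts of Theorem~\ref{Them:IS} in sequence, both of which follow from standard arguments about KL projection onto a Gaussian family and change of measure.

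For the first claim, I would begin from the definition of the KL divergence and split it as $\mathbb{D}_{\mathrm{KL}}(\mathbb{Q}^*|\mathbb{Q}_{U,\sigma^2}) = \mathbb{E}_{V\sim\mathbb{Q}^*}[\log q^*(V)] - \mathbb{E}_{V\sim\mathbb{Q}^*}[\log q(V|U,\sigma^2)]$. Since the first term is independent of $U$, minimizing over $U$ is equivalent to maximizing $\mathbb{E}_{V\sim\mathbb{Q}^*}[\log q(V|U,\sigma^2)]$. Plugging in the Gaussian density \req{qDef} and discarding constants independent of $U$, this reduces to minimizing the quadratic objective
\begin{equation*}
\frac{1}{2\sigma^2}\sum_{t=1}^{\tau} \mathbb{E}_{V\sim\mathbb{Q}^*}\bigl[(\bm{v}_t-\bm{u}_t)^{\top}(\bm{v}_t-\bm{u}_t)\bigr].
\end{equation*}
Because each $\bm{u}_t$ appears independently in a strictly convex quadratic, I would take the gradient with respect to $\bm{u}_t$ and set it to zero, which gives the first-order condition $\bm{u}_t^*=\mathbb{E}_{V\sim\mathbb{Q}^*}[\bm{v}_t]$. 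This establishes \req{OptMean}.

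For the second claim, I would invoke the standard importance sampling identity: as long as $\mathbb{Q}^*$ is absolutely continuous with respect to $\mathbb{Q}_{\hat{U},\sigma^2}$ (immediate here since both densities are strictly positive everywhere), one has
\begin{equation*}
\mathbb{E}_{V\sim\mathbb{Q}^*}[\bm{v}_t] = \mathbb{E}_{V\sim\mathbb{Q}_{\hat{U},\sigma^2}}\!\left[\frac{q^*(V)}{q(V|\hat{U},\sigma^2)}\,\bm{v}_t\right],
\end{equation*}
so it remains to show that $w(V)=q^*(V)/q(V|\hat{U},\sigma^2)$ matches the formula in \req{Rweight}. I would substitute the optimal form \req{OptQ} of $q^*$ together with the Gaussian densities \req{pDef} and \req{qDef}. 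The normalizing constants $(2\pi\sigma^2)^{d\tau/2}$ cancel between numerator and denominator, and expanding $(\bm{v}_t-\hat{\bm{u}}_t)^{\top}(\bm{v}_t-\hat{\bm{u}}_t)$ eliminates the $\bm{v}_t^{\top}\bm{v}_t$ contribution from $p(V|\bm{0},\sigma^2)$, leaving only the linear-in-$\bm{v}_t$ cross terms $-(1/\sigma^2)\hat{\bm{u}}_t^{\top}\bm{v}_t$ and the constants involving $\hat{\bm{u}}_t^{\top}\hat{\bm{u}}_t$. Combining the reward exponential from $q^*$ with these pieces then yields exactly the weight function~\req{Rweight}.

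The argument is essentially routine: the hardest part is simply bookkeeping the algebra when cancelling the Gaussian quadratic exponents and keeping track of $\sigma^2$ factors. No non-trivial analytic obstacle appears, since the first part is a projection of a probability measure onto a Gaussian family with fixed covariance (a convex, closed-form problem), and the second part is the familiar Radon--Nikodym change of measure made explicit for two Gaussians.
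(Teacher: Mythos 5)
Your proposal is correct and follows essentially the same route as the paper's proof: decompose the KL divergence, drop the $U$-independent entropy term, minimize the resulting convex quadratic in each $\bm{u}_t$ to obtain $\bm{u}_t^*=\mathbb{E}_{V\sim\mathbb{Q}^*}[\bm{v}_t]$, and then express this expectation under the proposal via the likelihood ratio $w(V)=q^*(V)/q(V|\hat{U},\sigma^2)$, cancelling the Gaussian normalizers and quadratic terms. Your added remarks on strict convexity and absolute continuity are small rigor upgrades over the paper's version but do not change the argument.
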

This theorem indicates that the optimal mean $U^*$ is given by importance sampling where the weight function is \req{Rweight}.
 In other words, the mean $\hat{U}$ of the Gaussian proposal distribution
 converges to the first moment of $q^*(V)$ via importance sampling.
We generate $n$ samples $\{V^i\}_{i=1}^{n}$ from the proposal distribution $\mathbb{Q}_{\hat{U},\sigma^2}$
and approximate $U^*$ as $\hat{\bm{u}}^*_t\!=\!\sum_i(w(V^i)\!/\!\sum_j w(V^j)\bm{v}^i_t)\!=\!\sum_i\bar{w}^i\bm{v}_t^i$ 
where $\sum_j w(V^j)$ is empirical normalization instead of $\eta$.
The $i$-th weight $\bar{w}^i$ is given by \looseness=-1
\begin{align}\textstyle
   \bar{w}^i=\frac{\mathrm{exp}\left(\frac{1}{\lambda} r(\x,\y(V^i)) -\frac{1}{\sigma^2}\sum_{t=1}^{\tau}\hat{\bm{u}}_t^{\top}\bm{v}^i_t\right)}{\sum_j\mathrm{exp}\left(\frac{1}{\lambda} r(\x,\y(V^j)) -\frac{1}{\sigma^2}\sum_{t=1}^{\tau}\hat{\bm{u}}_t^{\top}\bm{v}^j_t\right)}\label{WeightEq},
\end{align}
which is implemented by using a softmax function.
In this computation, the term of $\hat{\bm{u}}_t^{\top}\hat{\bm{u}}_t$ in \req{Rweight} is canceled between the numerator and denominator.
Since importance sampling requires a lot of samples if the proposal distribution $q(V|\hat{U},\sigma^2)$ is far from $\mathbb{Q}^*$,
we exploit the adaptive importance sampling~\citep{cappe2004population,bugallo2017adaptive}, which is iterative importance sampling.
Specifically, we updates $\hat{\bm{u}}_t$ by using importance sampling with $n$ sample for $\kappa$ iterations:
\begin{align}\textstyle
    \hat{\bm{u}}_t^{k+1}=\sum_{i=1}^n\bar{w}^i\bm{v}_t^{i,k}, ~~&\bm{v}_t^{i,k}\sim \mathcal{N}(\hat{\bm{u}}^k_t,\sigma^2\bm{I}),~~~~\mathrm{for}~~k\!=\!1,\dots,\kappa.\label{uupdate} 
\end{align}
After the optimization,
we can obtain the optimized response as $\y_{\mathrm{AISP}}\!=\!\y(U^{\kappa+1})$.
Instead of using $\y(U^{\kappa+1})$,
we can generate $\mathcal{V}\!=\!\{V^i|V^i\!\sim\!q(V|U^{\kappa+1},\sigma^2)\}$ and select $\y_{\mathrm{AISP}}\!=\!\mathrm{arg}\!\max_{V\in\mathcal{V}} \y(V)$.
Since the last generation for $U^{\kappa+1}$ is equivalent to increasing the number of iterations by one,
we skip the generation for $U^{\kappa+1}$ in practice.
Similar to BoN, we select the best sample from all samples in the computation of AISP: $\y_{\mathrm{AISP}}\!=\!\y(V^{i^*,k^*})$ where $i^*,k^*\!=\!\mathrm{arg}\!\max_{i,k} r(\x,\y(V^{i,k}))$.
The algorithm of AISP can be found in Appendix~\ref{sec:Alg}.

Note that our formulation is followed by MPPI~\citep{williams2018information}
and can actually be extended to MPPI by changing sampling methods.
While MPPI moves prediction and control windows by determining and applying $\hat{\bm{u}}_1$ to the control target for each iteration, AISP uses a fixed control window $t\!\in\![1,\tau]$.
This is because once moving windows fix the prefix tokens, generated responses lose diversity.
AISP can explore large response spaces by using the fixed window starting $t\!=\!1$ and adaptive importance sampling.\looseness=-1
\subsection{Modeling pre-logits distributions by Gaussian distributions}
As explained above, AISP uses a Gaussian distribution as the proposal distribution.
In this section, we discuss the reason why this proposal reduces the difficulty in the optimization,
and the connection between the Gaussian proposal and the output softmax layer.
\paragraph{How Gaussian assumption simplifies the problem}
 Theorem~\ref{Them:IS} indicates that the Gaussian proposal derives a simple algorithm by using importance sampling.
This is because the KL divergence between the optimal and the Gaussian distributions
is minimized by the expectation of $\bm{v}_t$ over $\mathbb{Q}^*$, and the weight function becomes simple because input trajectories also follow the Gaussian.
If we impose no constraints on the prior distribution,
this computation requires modeling techniques for complicated distributions such as normalizing flows~\citep{power2023variational} and does not yield a simple method.
Though the MPPI literature has explored several extensions to address multimodal distributions~\citep{lambert2021stein} and temporally correlated perturbations~\citep{lee2025time},
we adopt a simple Gaussian proposal as an initial step toward test-time alignment. 
This choice keeps AISP computationally tractable and allows us to isolate the effect of adaptive importance sampling in pre-logit space.
Exploring more expressive proposal families is an interesting direction for future work.
\paragraph{Connection with softmax output layer}
The Gaussian proposal is related to the implicit assumption of pre-logits by the softmax layer.
An auto-regressive LLM generally uses a softmax function with a linear layer as the last layer:
\begin{align}\textstyle
    P_{\mathrm{LLM}}(y_t=y^i|\bm{y}_{<t})=\frac{\exp(\bm{w}_i^{\top}\bm{z}_t+\bm{b}_i)}{\sum_{j=1}^{|\mathcal{V}|}\exp(\bm{w}_j^{\top}\bm{z}_t+\bm{b}_j)}.
\end{align}
The softmax function is derived when the conditional distribution of pre-logits $p(\bm{z}_t|y_t\!=\!y^i)$ is an exponential family distribution.\footnote{This section considers a conditional distribution given an output token $y_t$ not given the past token sequence $\bm{y}_{<t}$.}
If $p(\bm{z}_t|y_t\!=\!y^i)$ is a Gaussian distribution as
$p(\bm{z}_t|y_t\!=\!y^i)\!=\!\mathcal{N}(\bm{\mu}_{i},\bm{\Sigma})$, 
we have the following equality from Bayes' theorem:
\begin{align}\textstyle
    P(y_t\!=\!y^i|\bm{z}_t)\!
    \textstyle
    =\textstyle\frac{p(\bm{z}_t|y_t=y^i)P(y_t=y^i)}{\sum_{j}^{|\mathcal{V}|}p(\bm{z}_t|y_t=y^j)P(y_t=y^j)}
    =\textstyle\frac{
        \mathrm{exp}(\bm{\mu}_{i}^\top\!\Sigma^{-1}\bm{z}-\frac{1}{2}\bm{\mu}_{i}^\top\!\Sigma^{-1}\bm{\mu}_{i}+\ln\!P(y_t=y^i) )
    }{
        \sum_{j}^{|\mathcal{V}|}
        \mathrm{exp}(\bm{\mu}_{j}^\top\!\Sigma^{-1}\bm{z}-\frac{1}{2}\bm{\mu}_{j}^\top\!\Sigma^{-1}\!\bm{\mu}_{j}+\ln \!P(y_t=y^j))
    }\!.
\end{align}
This function corresponds to the softmax function with a linear layer such that
$\bm{w}_i^\top\!=\!\bm{\mu}_{y^i}^\top\Sigma^{-1}$ and $\bm{b}_i\!=\!-\frac{1}{2}\bm{\mu}_{y^i}^\top\Sigma^{-1}\bm{\mu}_{y^i}\!+\!\ln P(y_t\!=\!y^i)$.
From this result,
 \citet{lee2018simple} assume that the pre-trained neural classifier 
 has the pre-logtis following the  Gaussian distribution given a class label in image recognition.
 Since neural language models also use softmax and cross-entropy loss,
we can hypothesize that 
the trained pre-logit distribution $p(\bm{z}_t|y_t)$ follows a Gaussian distribution, which is empirically evaluated in Section~\ref{EmpAISP}.
From this perspective,
AISP can be regarded as exploring the optimal pre-logit distribution $p(\hat{\bm{z}}_t|y_t^*)$ under the assumption, where the distribution given the optimal response can be decomposed as $p(\hat{\bm{z}}_1,\dots,\hat{\bm{z}}_t|\mathbf{y}^*)\!=\!\prod_t p(\hat{\bm{z}}_t|y_t^*)$. 
Note that neural language models commonly use a Gaussian assumption~\citep{li2021bert}
and Mahalanobis distance~\citep{podolskiy2021revisiting}, which can be derived from the Gaussian assumption, on embeddings.

\subsection{Connection with BoN}
In AISP, $\lambda$ is the temperature parameter in softmax of \req{WeightEq}.
Since softmax is a smoothed approximation of argmax and $\lambda\!>\!0$,
\req{WeightEq} is asymptotically close to argmax when $\lambda\!\rightarrow \!0^+$.
Therefore, we have the following result:
\begin{theorem}
When $\lambda\!\rightarrow\!0^+$ and $\kappa\!=\!1$, AISP becomes BoN with the candidate set $\mathcal{Y}_n$ as
\begin{align}\textstyle
    \mathcal{Y}_n=\{\y(V^i)|V^i\sim q(V|\hat{U},\sigma^2),i=1,\dots,n \}.\label{BoNeq}
\end{align}
\end{theorem}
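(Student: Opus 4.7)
My plan is to trace through AISP with $\kappa=1$ iteration and verify that, in the $\lambda\to 0^+$ limit, both the internal weight update and the final output reduce to a standard BoN selection over the samples drawn at iteration $k=1$.

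First, I would identify the candidate set. With $\kappa=1$, AISP draws $n$ trajectories $V^{i,1}\sim\mathcal{N}(\hat{\bm{u}}^1,\sigma^2\bm{I})$, where $\hat{U}=[\hat{\bm{u}}^1_1,\dots,\hat{\bm{u}}^1_\tau]$ is the initial proposal mean. Hence the set of candidate responses produced during the run is exactly $\{\y(V^{i,1}):i=1,\dots,n\}=\mathcal{Y}_n$, matching \req{BoNeq}.

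Next, I would analyse the weights \req{WeightEq} in the zero-temperature limit. The normalized weight $\bar{w}^i$ is a softmax over the logits
\[
\ell^i=\tfrac{1}{\lambda}\,r(\x,\y(V^{i,1}))-\tfrac{1}{\sigma^2}\sum_{t=1}^{\tau}(\hat{\bm{u}}^1_t)^{\!\top}\bm{v}_t^{i,1}.
\]
For any realization of the samples, the second summand is bounded in $\lambda$, while the first one dominates as $\lambda\to 0^+$. Letting $i^*\in\argmax_i r(\x,\y(V^{i,1}))$, the gap $\ell^{i^*}-\ell^i$ tends to $+\infty$ for every $i$ outside the argmax set, so the standard zero-temperature softmax limit yields $\bar{w}^{i^*}\to 1$ and $\bar{w}^i\to 0$ for $i\neq i^*$. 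Substituting into the update \req{uupdate} gives $\hat{\bm{u}}_t^{2}\to \bm{v}_t^{i^*,1}$, confirming that the adaptive step collapses to the selection of the best sample.

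Finally, I would connect this to the output rule. The AISP selection criterion, restricted to $k=1$, returns $\y(V^{i^*,1})$, which is by definition the BoN output applied to $\mathcal{Y}_n$. The one mildly delicate step is handling ties in reward values: since $r(\x,\y(\cdot))$ is a function of discrete tokens, the event $\{r(\x,\y(V^i))=r(\x,\y(V^j))\}$ can have positive probability, so $i^*$ need not be unique. I would resolve this by adopting a common deterministic tie-breaking rule (e.g.\ smallest index) for both AISP and BoN, which preserves the equivalence. I do not expect any substantive obstacle beyond this bookkeeping, as the main argument is simply the well-known zero-temperature limit of the softmax.
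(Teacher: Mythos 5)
Your argument is correct and follows essentially the same route as the paper's proof: both reduce the normalized weight \req{WeightEq} to a softmax whose reward term scales as $1/\lambda$ while the cross term $\frac{1}{\sigma^2}\sum_t\hat{\bm{u}}_t^{\top}\bm{v}_t^i$ stays bounded, invoke the zero-temperature (winner-take-all) limit so that $\hat{U}$ collapses onto the best sample, and conclude that the output coincides with BoN over $\mathcal{Y}_n$. Your explicit treatment of reward ties is a small point of extra care that the paper's proof omits, but it does not change the substance of the argument.
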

From this result, AISP can be regarded as a continuous approximation of BoN
with a specific candidate set.
In other words, AISP is a generalization of BoN, and AISP subsumes BoN.
Comparing BoN using \req{BoNeq} with BoN using popular sampling strategies, 
we can evaluate the Gaussian proposal distribution in AISP.
Thus, we empirically evaluate BoN using this candidate set when $\hat{U}\!=\!\bm{O}$
and observed it achieves competitive performance in experiments.

\subsection{Implementation}\label{sec:imp}
We have explained the formulation of AISP. 
In this section, we will explain the technique to enhance the practical performance and parallelism in implementation.
\paragraph{Relaxation of constraints}
As discussed above, $\lambda$ can be regarded as a temperature parameter.
Small $\lambda$ allows deviation from the base LLM, but large $\lambda$ causes numerical instability~\citep{williams2018information}.
To achieve both numerical stability and large penalties,
we relax $\mathbb{P}$ as $p(V|\alpha\hat{U},\sigma^2)$ from $p(V|\bm{0},\sigma^2)$ where $0\!<\!\alpha\!<\!1$
by using the technique in MPPI \citep{williams2018information}. 
$\alpha $ and $\lambda$ control the strength of the KL regularization (Appendix~\ref{app:KL}).
As a result, the weight $\bar{w}^i$ becomes
\begin{align}\textstyle
    \bar{w}^i=\frac{\mathrm{exp}\left(\frac{1}{\lambda} r(\x,\y(V^i)) -\frac{1-\alpha}{\sigma^2}\sum_{t=1}^{\tau}\hat{\bm{u}}_t^{\top}\bm{v}^i_t\right)}{\sum_j\mathrm{exp}\left(\frac{1}{\lambda} r(\x,\y(V^j)) -\frac{1-\alpha}{\sigma^2}\sum_{t=1}^{\tau}\hat{\bm{u}}_t^{\top}\bm{v}^j_t\right)}\label{LastWeightEq}.
\end{align}
\paragraph{Parallelization and Batched AISP}\label{Sec:BatchedAISP}
\begin{figure}[tb]
    \centering
    \includegraphics[width=.7\linewidth]{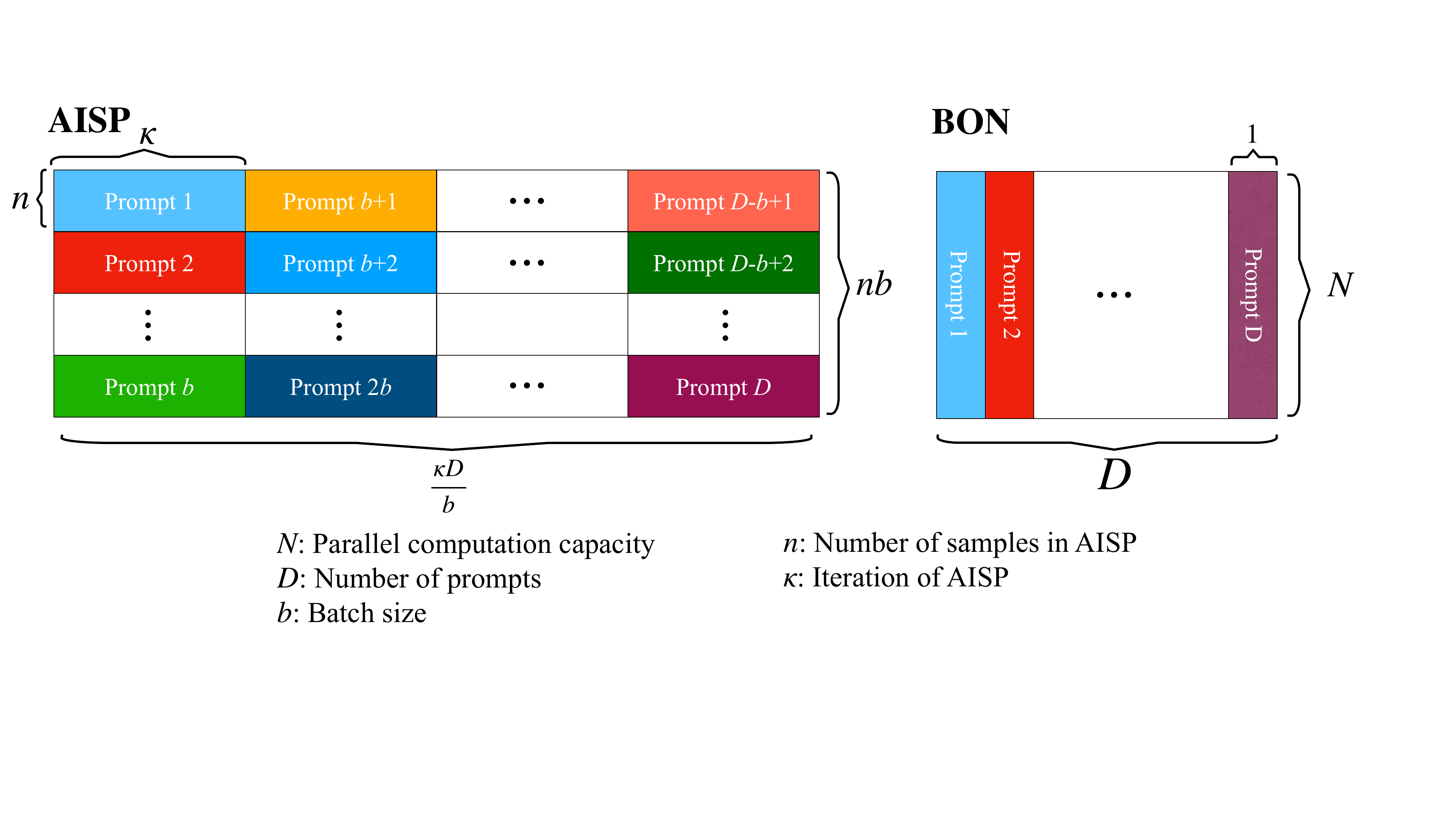}
    \caption{Schematic illustration of computational costs (vertical: parallelism, horizontal: iterations).
    }
    \label{Parallel}
\end{figure}
Adaptive importance sampling contains both parallel and sequential processes.
We generate $\y(V^i)$ for $i=1,\dots,n$ in parallel, like BoN. 
In contrast, $\kappa$ iterations of \req{uupdate} should be executed sequentially.
Therefore, $n$ and $\kappa$ increase space complexity and time complexity, respectively.
They should be tuned according to practical needs and performance.
Additionally, we can compute AISP for Batched prompts $\{\mathbf{x}^i\}_{i=1}^b$.
Let $D$ and $b$ be the number of total prompts and batch size.
The number of iterations and parallel computations in Batched AISP and BoN become almost the same 
when $\kappa=b$ and $n=N/b$ (\rfig{Parallel}). 
Strictly speaking, there is the overhead for computing
$\sum_{t=1}^{\tau}\hat{\bm{u}}_t^{\top}\bm{v}^i_t$ in the weight function (\req{LastWeightEq}) of $O(\tau d)$, which is negligible compared to the overall complexity.
We compare Batched AISP with BoN in Section~\ref{batchedAISP}

\section{Related work}
There are several test-time alignment methods that train auxiliary evaluators such as critics, value functions, or autoregressive reward models,
e.g., RE-Control~\citep{kong2024aligning}, Critic-Guide Decoding~\citep{kim2023critic}, Controlled Decoding~\citep{mudgal2024controlled}, 
and GenARM~\citep{xugenarm}.
Though these methods avoid the high computational costs of fine-tuning LLMs,
they still require additional training, and some need to build datasets in advance~\citep{kong2024aligning,mudgal2024controlled,han2024value}.
Among them, we compare AISP with RE-Control because it is the closest to AISP.
\citet{khanov2024args} proposes ARGS, which adds the weighted reward to the logit for each token.
ARGS can be used as a training-free test-time alignment given a reward model,
and we also compare AISP with it. 
Similarly, transfer-$Q^*$~\citep{chakraborty2024transfer} estimates token-level value function 
through the trajectory-based reward function. When the base LLM is not aligned with the given target reward in advance,
transfer-$Q^*$ requires the base reward model, and it is hard to fairly compare it with AISP.

Another line of work studies sampling-based alignment, especially BoN and its variants~\citep{snell2024scaling,lightman2023let,brown2024large,ichiharaevaluation,jinnai2024regularized}.
\citet{snell2024scaling} have investigated the computational cost in test-time alignments and
showed that BoN outperforms beam-search when using a high computational budget.
While it is revealed that BoN's win-rate against a base LLM is bounded by $N/(N+1)$~\citep{beirami2024theoretical}, 
there are few studies to improve the efficiency of reward optimization in terms of the number of used samples.
As another sampling method, \citet{zhu2025soft} proposed Soft Reasoning based on Bayesian optimization.
Soft Reasoning applies Bayesian optimization by using the Gaussian perturbation similar to AISP.
However, Soft Reasoning only applies the perturbation to the initial token embedding,
which might be due to the difficulty in Bayesian optimization.
AISP operates on a more expressive sequence because it explores the entire trajectory of pre-logits.
\citet{loula2025syntactic} used importance sampling to control generation of LLMs on the basis of the given potential function.
Though it is similar to AISP, they generate tokens using task-specific potential functions rather than the reward model.
Additionally, they use importance sampling on the token space rather than the pre-logit space.
Thanks to continuous pre-logits space, our method can 
employ a Gaussian distribution, which is easy to handle.\looseness=-1
\section{Experiments}
\subsection{Setup}
\paragraph{Datasets and models}
We conduct experiments to evaluate the effectiveness of AISP
on test-time alignment of LLMs for helpfulness and minimizing harmfulness.
We use Anthropic’s HH-RLHF~\citep{bai2022training} and Stanford human preferences (SHP) datasets~\citep{pmlr-v162-ethayarajh22a}
following \citep{kong2024aligning}.
These datasets are used to align LLMs for helpfulness and harmlessness.
We use randomly selected 1000 entries of the test datasets due to limited computation resources, like \citep{jinnai2024regularized}.
We use Llama-3-8B~\citep{llama3modelcard}, Vicuna-7B-v1.5~\citep{vicuna2023}, 
and Gemma3-4B~\citep{gemma_2025} as the base LLMs, 
and reward models are UltraRM-13b (UltraRM)~\citep{cui2023ultrafeedback} and Eurus-RM-7b (Eurus)~\citep{yuan2024advancing}.
In Appendix~\ref{sec:othertask}, 
we evaluate AISP on Alpaca-Eval~\citep{Li_AlpacaEval_An_Automatic_2023}, GSM8K~\citep{cobbe2021gsm8k}, HumanEval~\citep{chen2021codex}, and TruthfulQA~\citep{lin2022truthfulqa}.
\paragraph{Baselines and hyper-parameter tuning} 
We compare AISP with BoN using two generating strategies to construct candidate sets $\mathcal{Y}_N$: \textit{BoN (top-p)} and \textit{BoN ($\mathcal{N}$)}.
BoN~(top-p) uses top-p (nucleus) sampling, and BoN~($\mathcal{N}$) injects
$n\kappa$ Gaussian noises following $\mathcal{N}(\bm{0},\sigma^2\bm{I})$ into pre-logits as \req{BoNeq}.
After injection, we use greedy search for each injected pre-logits to construct $\mathcal{Y}_N$ of  BoN~($\mathcal{N}$).
Both $n$ and $\kappa$ of AISP are set to 32, and $N$ of BoN is set to 1024 ($=\kappa n$).
Additionally, we also compare AISP with ARGS-greedy (ARGS)~\cite{khanov2024args} and RE-Control~\cite{kong2024aligning},
which are reward-based test-time alignment methods.
ARGS adds the weighted reward to logits and selects the best token for each token generation,
and RE-Control adds the control input to maximize the value function.
We tune hyper-parameters for each method on partial training datasets,
which is described in Appendix~\ref{App:Hyp}.
Sensitivity to hyper-parameters in AISP is shown in Appendix~\ref{App:HypDep}

\paragraph{Evaluation metrics} The evaluation metrics are the reward values and win rate against BoN.
We evaluate $r(\x,\y)$ at the last by using UltraRM, and win rate is the rate at which GPT-4 
considers that the response is better than baseline responses following~\citep{kong2024aligning,khanov2024args,vicuna2023}.
While previous studies \citep{kong2024aligning,khanov2024args,vicuna2023} 
use the preferred response as baseline responses, 
we directly compare the responses of AISP with those of BoN.
Additionally, we evaluate diversity and coherence following~\citep{kong2024aligning,khanov2024args} in Appendix~\ref{App:DivAndCoh}.
\subsection{Main results: Average reward, win rate, and convergence}
\begin{table*}[tbp]
    \caption{Average Rewards. For BoN, $N$ is set to $n\kappa$.
    Values are presented as mean (standard deviation) for three trials. ARGS-greedy does not contain a stochastic process.}
    \label{Table:main}
    \centering
    {\scriptsize
    \begin{tabular}[tb]{cccccccc}\toprule
        Dataset &LLM &Reward model &BoN (top-$p$) &BoN ($\mathcal{N}$)&RE-Control&ARGS&AISP (Ours)\\\cmidrule(r){1-3}\cmidrule(r){4-8}
        SHP&Llama3-8B& UltraRM& -2.38 (0.04)&-2.30 (0.03) & -9.28 (0.03)& -3.94 &\textbf{-1.39} (0.02) \\
        SHP&Vicuna-7B & UltraRM& -1.78 (0.02 )&-1.94 (0.01)&-5.67 (0.04) &-11.97 & \textbf{-1.46} (0.02) \\
        SHP&Gemma3-4B& UltraRM& -3.43 (0.02)&-3.29 (0.02)& -9.97 (0.02) & -7.08  & \textbf{-2.39} (0.03)\\
        SHP&Llama3-8B & Eurus&-6.42 (0.08)&-7.22 (0.04)&  -9.62 (0.1)&-11.91& \textbf{-6.17} (0.03)\\
        SHP&Vicuna-7B & Eurus& -3.83 (0.02)&-4.09 (0.02)&-5.24 (0.03)& -12.67&\textbf{-3.72} (0.02)\\
        SHP& Gemma3-4B& Eurus&-6.45 (0.06)&-6.30 (0.04)& -10.1 (0.1) &-14.1& \textbf{-5.78} (0.03) \\\cmidrule(r){1-3}\cmidrule(r){4-8}
        HHRLHF&Llama3-8B& UltraRM&-2.62 (0.003)&-2.60 (0.01)& -7.54 (0.06)& -9.27 & \textbf{-2.45} (0.00)  \\
        HHRLHF&Vicuna-7B & UltraRM & -3.08 (0.00)&-3.02 (0.01)& -4.43 (0.05)& -11.53& \textbf{-2.86} (0.02)   \\
        HHRLHF&Gemma3-4B& UltraRM& -2.60 (0.02)&-2.35 (0.01)& -9.00 (0.2) & -11.11  & \textbf{-2.18} (0.02)\\
        HHRLHF&Llama3-8B & Eurus& -5.02 (0.03)&\textbf{-4.91} (0.03) & -11.28 (0.3) & -10.50& -4.96 (0.05) \\
        HHRLHF&Vicuna-7B & Eurus &-3.71 (0.00)&-3.77 (0.02)& -10.24 (0.2)& -11.46&\textbf{-3.65} (0.04)\\
        HHRLHF& Gemma3-4B& Eurus& -4.82 (0.05)&-4.84 (0.03)& -10.81 (0.3) & -11.74 &\textbf{-4.80} (0.02) \\
        \bottomrule
    \end{tabular}
    }
\end{table*}
Table~\ref{Table:main} lists average rewards of each method.
AISP achieves the highest among methods.
AISP achieved up to about 40\% improvement over BoN (top-$p$).
AISP also outperforms RE-Control even though it does not require building training datasets.
This result indicates that  AISP is superior to baselines as a sampling-based reward optimization.
ARGS does not work very well in our experiment. 
This is because ARGS needs to evaluate next token generation by a reward model.
For this purpose, a token-level reward model $r(y_t, \bm{y}_{<t})$ is more suitable than a trajectory-level reward model $r(\mathbf{x}, \mathbf{y})$ used in our experiment.
However, token-level reward models generally require additional training or specialized techniques~\citep{yoon2024tlcr,chakraborty2024transfer,xugenarm}.
BoN with gaussian pre-logit noise (BoN ($\mathcal{N}$))
achieves competitive performance with BoN (top-$p$). 
This implies that the Gaussian assumption for pre-logits is not very strong,
and it generates good candidate responses even without importance sampling.

Table~\ref{Table:sub} lists the win rate for AISP vs BoN (top-$p$).
To compute win rate, we sampled 100 pairs of prompts and responses at random, and GPT-4 judges whether the response from AISP or BoN is better.
Since the values are averaged over three trials and rounded off, they do not always sum to 100~\%.
This table shows that 
AISP has higher win rates than those of BoN (top-$p$) under almost all of conditions. 
The results of average rewards and win rates show that AISP aligns LLMs better than BoN: 
i.e., AISP can generate more  helpful and harmless responses through maximization of rewards than BoN.\looseness=-1

To compare sample efficiencies of AISP and BoN,
\rfig{Fig} 
plots curves of reward values during iterations on SHP using Llama-3-8B and UltraRM.
In this figure, AISP (Mean at $k$) is $1/n\sum_i r(\x,\y(V^{i,k}))$.
AISP (Best at $k$) is $\max_{i} r(\x,\y(V^{i,k}))$, and AISP (Best so far) is $\max_{i,1\leq j\leq k} r(\x,\y(V^{i,j}))$, which is $r_{\mathrm{best}}$ in Algorithm~\ref{Alg} at $k$.
BoN corresponds to $\max_{\y\in \mathcal{Y}_N} r(\x,\y)$ using $N\!=\!nk$ samples.
These curves are also evaluated on randomly selected 100 pairs, and are averaged over data samples and three trials.
This figure shows that though AISP underperforms BoN in the early iterations,
it improves more rapidly and eventually surpasses BoN as the number of iterations increases.
In addition, while the maximum number of iterations $k$ was set to 32 in this experiment,
it is likely that the performance gap would become more pronounced with a larger number of iterations.
Reward values of AISP (Mean at $k$) and AISP (Best at $k$) also increase according to $k$. 
This indicates that AISP not only optimizes the resulting response but also optimizes the distribution of responses.
Thus, AISP obtains aligned distributions of response without any additional techniques.
Note that results on other models and datasets are presented in Appendix~\ref{sec:subconv}, which show consistent trends with the main results.
\begin{figure}[tb]
    \begin{minipage}{0.52\textwidth}
    \centering
    \captionof{table}{Win rate for AISP vs BoN (top-$p$) on SHP (top) and HHRLHF (bottom):
    (L: Llama3-8B, V: Vicuna-7B, G: Gemma3-4B, U: UltraRM, E: Eurus)}
    \label{Table:sub}
    \centering
    {\scriptsize
    \begin{tabular}[tb]{@{}c@{\hspace{2.0mm}}c@{\hspace{2.0mm}}c@{\hspace{2.0mm}}c@{\hspace{2.0mm}}c@{\hspace{2.0mm}}c@{\hspace{2.0mm}}c@{}}\toprule
        &L \& U &L \& E &V \& U &V \& E &G \&  U &G \& E \\\midrule
AISP&\textbf{51.3}&\textbf{47.0}&\textbf{35.3}&\textbf{36.0}&\textbf{53.0}&\textbf{52.7}\\
Draw&6.7&7.7&30.3&36.0&5.7&8.3\\
    BoN&42.0&45.3&34.3&28.0&41.3&39.0\\\midrule
AISP&\textbf{41.0}&\textbf{46.3}&\textbf{25.0}&32.3&\textbf{44.0}&\textbf{47.3}\\
Draw&18.0&10&50.0&34.7&19.3&11.0\\
BoN&\textbf{41.0}&43.7&\textbf{25.0}&\textbf{33.0}&36.7&41.7\\\bottomrule
\end{tabular}
    }\\\vspace{5pt}
    \centering
    \includegraphics[width=.67\linewidth]{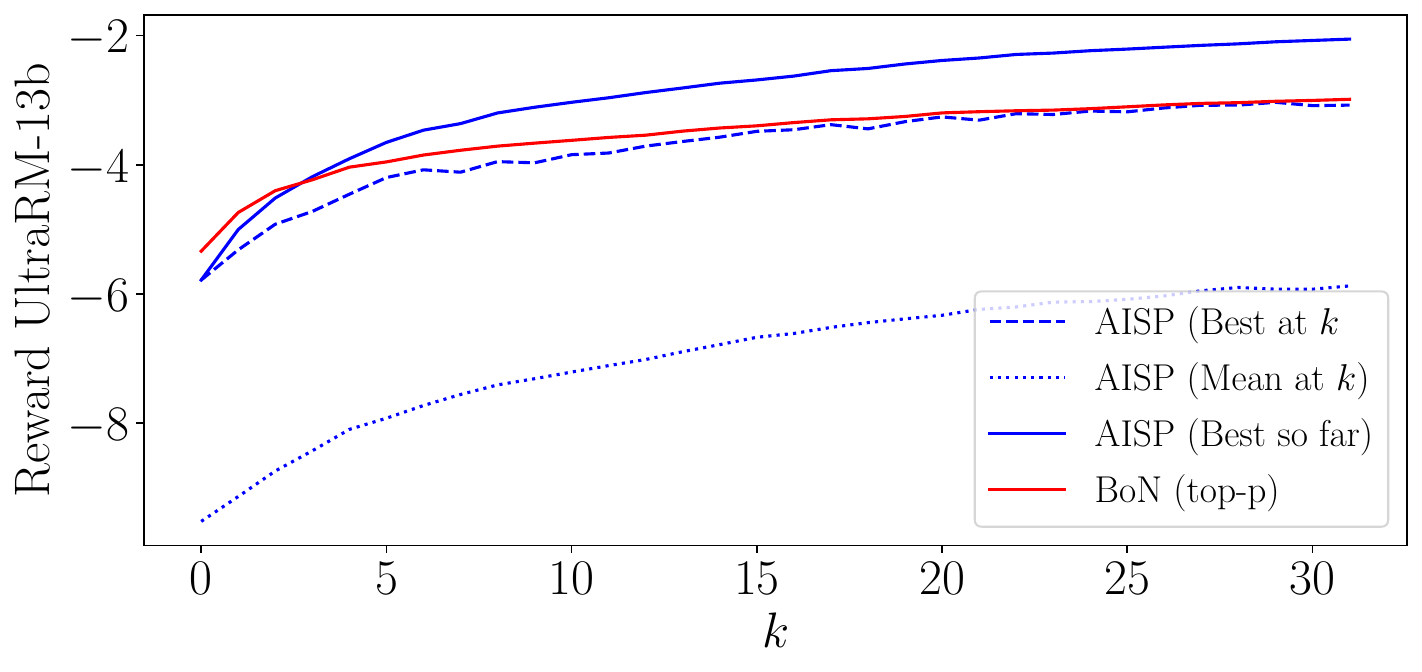}
    \captionof{figure}{Reward curves over iterations. Both methods generate 32 samples/$k$.}
    \label{Fig}
        \centering
\includegraphics[width=.67\linewidth]{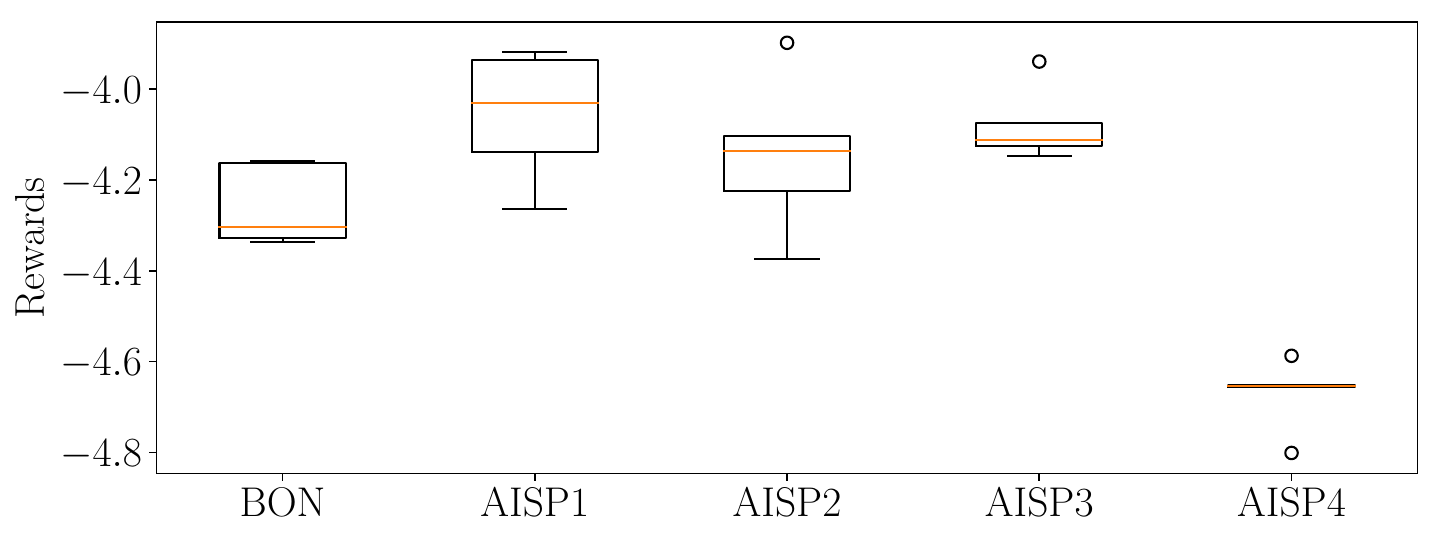}
    \captionof{figure}{Rewards of Batched AISP and BoN. }
    \label{Fig:Batch}

\end{minipage}\hfill
    \begin{minipage}{0.45\textwidth}
    \centering
    \includegraphics[width=.78\linewidth]{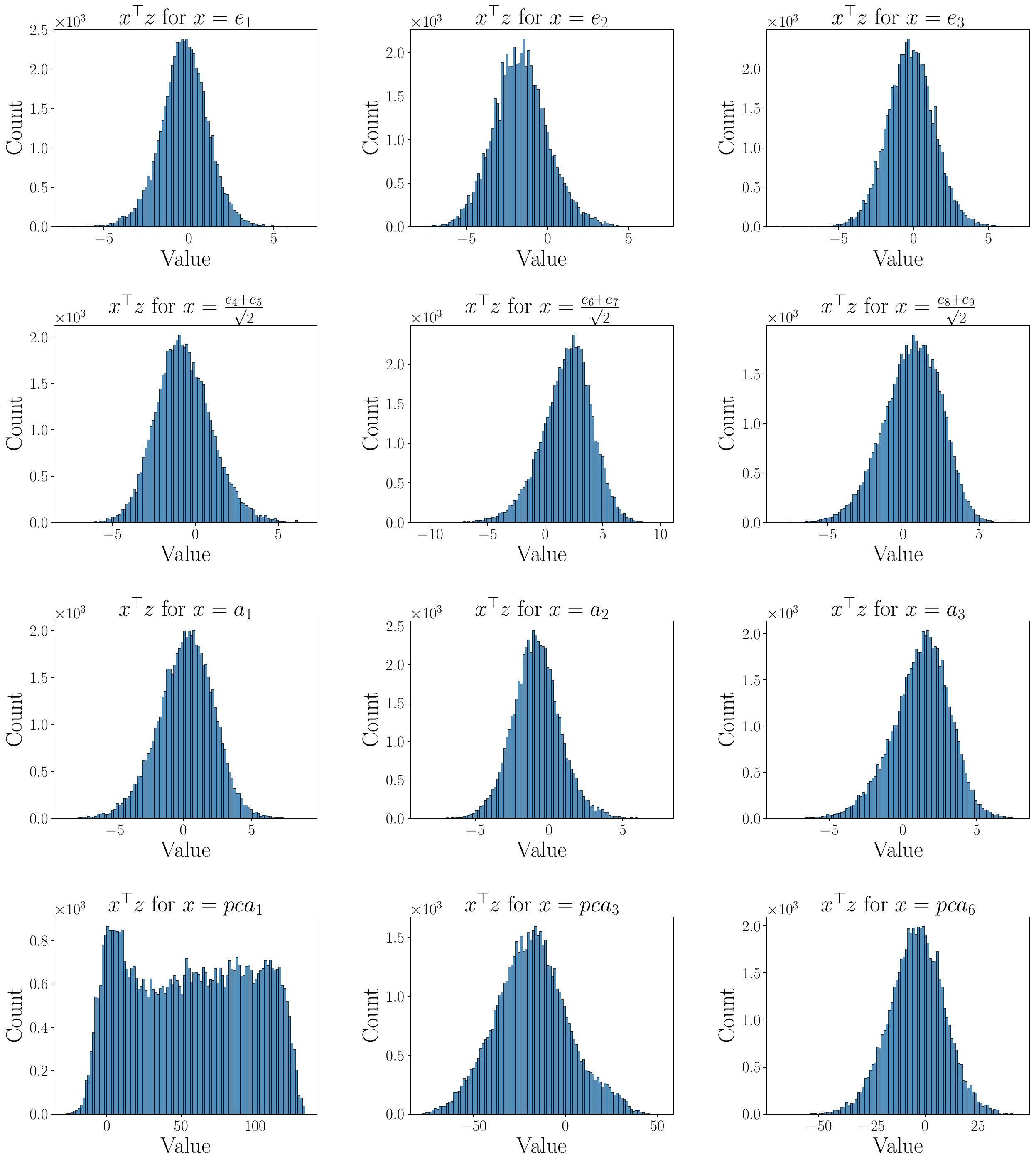}
    \caption{Histogram of projected prelogits}
    \label{Gauss}
    \centering
    \includegraphics[width=.88\linewidth]{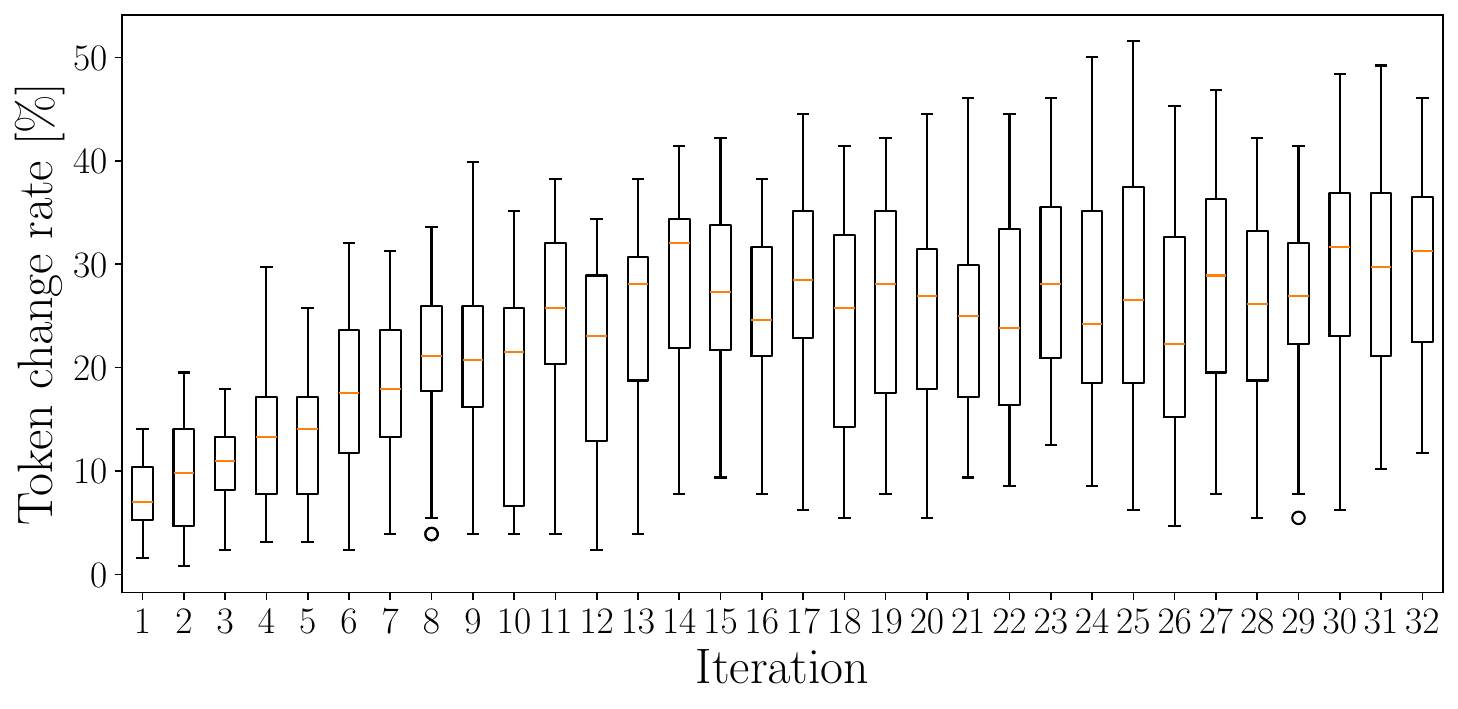}
    \caption{Argmax token flip rate by $\bm{v}_t$.
    At each token, we fix the AISP-generated prefix.}
    \label{TokenChange}
\end{minipage}

\end{figure}
\subsection{Batched AISP}\label{batchedAISP}
As above, AISP achieves higher rewards than BoN with fewer samples.
However, it requires sequential process: $\kappa$ iterations.
When we need to reduce time complexity, 
AISP can be accelerated by processing $b$ prompts with small $n$ in batches as discussed in Section~\ref{sec:imp}.
We compare Batched AISP with BoN of $N=128$ under the same iterations for processing multiple prompts; i.e., $\kappa\!=\!b$ and $N\!=\!nb$. 
In this experiment, we use Llama3-8B with UltraRM on 100 prompts in SHP and evaluate Batched AISP under multiple settings of $(b,n)$.
In \rfig{Fig:Batch}, $(b,n)$ of AISP1, AISP2, AISP3, and AISP4 correspond to (8,16), (16,8), (32,4), and (64,2), respectively.
\rfig{Fig:Batch} demonstrates that AISP can outperform BoN even under the same iterations for $D$ prompts.
In addition, this figure shows that AISP can exceed BoN 
if it has at least four samples per iteration. 

We evaluate the wall-clock time of AISP on a stand-alone server (A100 VRAM 40GB).
For fair comparison, we set the generated token length to fixed 128.
First, to confirm that the overhead for weight updating 
\req{uupdate} is trivial as explained in Section~\ref{Sec:BatchedAISP},
we evaluate the wall-clock time for one iteration of AISP ($n\!=\!32$) and BoN ($N\!=\!32$) averaged over 10 prompts in SHP with Llama3-8B and UltraRM. 
AISP and BoN take 7.75~s and 7.68~s, respectively. 
Thus, the overhead for weight updating is only 1\%.
Next, we evaluated the wall-clock time for Batched AISP $(b, n)\!=\!(4,8)$ and BoN ($N\!=\!32$) on 100 prompts in SHP. 
Batched AISP and BoN take 9.35~s and 6.83~s per prompt, respectively.
The overhead reaches 36.8~\% because the wall-clock time of Batched AISP for one mini-batch is determined by the largest prompts in the mini-batch.
After sorting the prompts in terms of the token length, the wall-clock time of Batched AISP becomes 7.38~s/prompt,  and thus, the overhead is just 8~\%.\looseness=-1

\subsection{Empirical characterization of AISP}\label{EmpAISP}
\paragraph{Distribution of pre-logits given output token}
To justify the Gaussian proposal distribution,
we analyzed $p(\bm{z}_t |y_t)$ for Llama3-8B. We collect $\bm{z}_t$ for the fixed $y_t$ during generation on SHP without noise injection and plot the histogram of one-dimensional projected pre-logit $\lbrace \bm{x}^\top \bm{z}^i_t\rbrace_{i}$ in \rfig{Gauss}.
$y_t$ is set to ``and'' because it is a common word appearing frequently. To visualize high-dimensional $\bm{z}_t$, we projected them to one dimension by $\bm{x}^\top\bm{z}_t$ using a fixed vector $\bm{x}$ since inner products follow Gaussian if $\bm{z}_t$ follows Gaussian.
$\bm{x}$ is set to a randomly selected axis: $\bm{e}_i$, mixed axis: $(\bm{e}_i+\bm{e}_j)/\sqrt{2}$, normalized random vector: $\bm{a}=\bm{x}/||\bm{x}||$ for $\bm{x}\sim \mathcal{N}(0,I)$, and the 1st, 3rd, 6th principal component analysis directions of $\lbrace \bm{z}_t^i\rbrace_i$: $\bm{pca}$.
In \rfig{Gauss}, most of $\bm{x}^\top \bm{z}_t$ follow uni-modal Gaussian-like distributions. For the 1st PCA direction (left bottom), they do not follow Gaussian while it is close to Gaussian from the 3rd PCA onwards. This suggests that the true conditional pre-logit distribution is not exactly Gaussian, but that a Gaussian proposal is a reasonable approximation in many directions.
\paragraph{Token-level effects of pre-logit perturbations}
To quantify the local effect of the pre-logit perturbation $\bm{v}_t$,
we evaluate how often it changes the next token during AISP decoding. 
For each AISP-generated sequence, we fix the prefix $\bm{y}_{<t}$ at each step and 
compare the next-token argmax with and without $\bm{v}_t$. 
The flip rate is the fraction of decoding steps at which the top-logit token changes. 
This is a step-wise diagnostic and does not compare the final AISP response with an unperturbed generated response.
Figure~\ref{TokenChange} shows that 
token flip only occurs in about 10~\% of tokenes for each response in the early iterations.
This indicates that AISP explores good responses near the base LLM early on.
As iterations increase, the frequency of flips increases and enables the generation of low-probability responses relative to the base LLM.
Even after $\bm{u}_t$ has a large value, flips do not always occur.
This indicates that AISP appears to preserve much of the base LLM’s local generation behaviour.\looseness=-1
\section{Conclusion}
In this paper, we propose adaptive importance sampling on a pre-logit distribution
for alignment of LLMs. Our method injects a Gaussian perturbation to the pre-logit of the base LLM,
and optimizes the Gaussian perturbation through importance sampling.
Since our method is simple, 
future work could include combinations of AISP and fine-tuning, and different importance sampling techniques.
\bibliography{CameraBib.bib}

@string{aaai = {Proc.\ AAAI}}

@string{acl = {Proc.\ ACL}}

@string{emnlp = {Proc.\ EMNLP}}

@string{iclr = {Proc.\ ICLR}}

@string{icml = {Proc.\ ICML}}

@string{icra = {Proc.\ ICRA}}

@string{nips = {Proc.\ NeurIPS}}

@incollection{softmax,
	author = {Bridle, John S},
	booktitle = {Neurocomputing},
	pages = {227--236},
	publisher = {Springer},
	title = {Probabilistic interpretation of feedforward classification network outputs, with relationships to statistical pattern recognition},
	year = {1990}}

@inproceedings{best,
	author = {Athalye, Anish and Carlini, Nicholas and Wagner, David},
	booktitle = icml,
	pages = {274--283},
	title = {Obfuscated Gradients Give a False Sense of Security: Circumventing Defenses to Adversarial Examples},
	year = {2018}}

@article{williams2018information,
  title={Information-theoretic model predictive control: Theory and applications to autonomous driving},
  author={Williams, Grady and Drews, Paul and Goldfain, Brian and Rehg, James M and Theodorou, Evangelos A},
  journal={IEEE Transactions on Robotics},
  volume={34},
  number={6},
  pages={1603--1622},
  year={2018},
  publisher={IEEE}
}

@article{williams2017model,
  title={Model predictive path integral control: From theory to parallel computation},
  author={Williams, Grady and Aldrich, Andrew and Theodorou, Evangelos A},
  journal={Journal of Guidance, Control, and Dynamics},
  volume={40},
  number={2},
  pages={344--357},
  year={2017},
  publisher={American Institute of Aeronautics and Astronautics}
}

@article{kong2024aligning,
  title={Aligning large language models with representation editing: A control perspective},
  author={Kong, Lingkai and Wang, Haorui and Mu, Wenhao and Du, Yuanqi and Zhuang, Yuchen and Zhou, Yifei and Song, Yue and Zhang, Rongzhi and Wang, Kai and Zhang, Chao},
  journal=nips,
  volume={37},
  pages={37356--37384},
  year={2024}
}

@article{lee2018simple,
  title={A simple unified framework for detecting out-of-distribution samples and adversarial attacks},
  author={Lee, Kimin and Lee, Kibok and Lee, Honglak and Shin, Jinwoo},
  journal={Advances in neural information processing systems},
  volume={31},
  year={2018}
}

@article{beirami2024theoretical,
  title={Theoretical guarantees on the best-of-n alignment policy},
  author={Beirami, Ahmad and Agarwal, Alekh and Berant, Jonathan and D'Amour, Alexander and Eisenstein, Jacob and Nagpal, Chirag and Suresh, Ananda Theertha},
  journal={arXiv preprint arXiv:2401.01879},
  year={2024}
}

@inproceedings{yang2024asymptotics,
  title={Asymptotics of language model alignment},
  author={Yang, Joy Qiping and Salamatian, Salman and Sun, Ziteng and Suresh, Ananda Theertha and Beirami, Ahmad},
  booktitle={2024 IEEE International Symposium on Information Theory (ISIT)},
  pages={2027--2032},
  year={2024},
  organization={IEEE}
}

@inproceedings{mudgal2024controlled,
  title={Controlled Decoding from Language Models},
  author={Mudgal, Sidharth and Lee, Jong and Ganapathy, Harish and Li, Yaguang and Wang, Tao and Huang, Yanping and Chen, Zhifeng and Cheng, Heng-Tze and Collins, Michael and Strohman, Trevor and others},
  booktitle=icml,
  pages={36486--36503},
  year={2024},
  organization={PMLR}
}

@inproceedings{jinnai2024regularized,
  title={Regularized best-of-n sampling to mitigate reward hacking for language model alignment},
  author={Jinnai, Yuu and Morimura, Tetsuro and Ariu, Kaito and Abe, Kenshi},
  booktitle={ICML 2024 Workshop on Models of Human Feedback for AI Alignment},
  year={2024}
}

@inproceedings{
sessa2025bond,
title={{BOND}: Aligning {LLM}s with Best-of-N Distillation},
author={Pier Giuseppe Sessa and Robert Dadashi-Tazehozi and Leonard Hussenot and Johan Ferret and Nino Vieillard and Alexandre Rame and Bobak Shahriari and Sarah Perrin and Abram L. Friesen and Geoffrey Cideron and Sertan Girgin and Piotr Stanczyk and Andrea Michi and Danila Sinopalnikov and Sabela Ramos Garea and Am{\'e}lie H{\'e}liou and Aliaksei Severyn and Matthew Hoffman and Nikola Momchev and Olivier Bachem},
booktitle={The Thirteenth International Conference on Learning Representations},
year={2025}
}

@inproceedings{lightman2023let,
  title={Let's verify step by step},
  author={Lightman, Hunter and Kosaraju, Vineet and Burda, Yuri and Edwards, Harrison and Baker, Bowen and Lee, Teddy and Leike, Jan and Schulman, John and Sutskever, Ilya and Cobbe, Karl},
  booktitle={The Twelfth International Conference on Learning Representations},
  year={2023}
}

@article{brown2024large,
  title={Large language monkeys: Scaling inference compute with repeated sampling},
  author={Brown, Bradley and Juravsky, Jordan and Ehrlich, Ryan and Clark, Ronald and Le, Quoc V and R{\'e}, Christopher and Mirhoseini, Azalia},
  journal={arXiv preprint arXiv:2407.21787},
  year={2024}
}

@article{snell2024scaling,
  title={Scaling llm test-time compute optimally can be more effective than scaling model parameters},
  author={Snell, Charlie and Lee, Jaehoon and Xu, Kelvin and Kumar, Aviral},
  journal={arXiv preprint arXiv:2408.03314},
  year={2024}
 }

@inproceedings{korbak2022rl,
  title={RL with KL penalties is better viewed as Bayesian inference},
  author={Korbak, Tomasz and Perez, Ethan and Buckley, Christopher},
  booktitle={Findings of the Association for Computational Linguistics: EMNLP 2022},
  pages={1083--1091},
  year={2022}
}

@article{bugallo2017adaptive,
  title={Adaptive importance sampling: The past, the present, and the future},
  author={Bugallo, Monica F and Elvira, Victor and Martino, Luca and Luengo, David and Miguez, Joaquin and Djuric, Petar M},
  journal={IEEE Signal Processing Magazine},
  volume={34},
  number={4},
  pages={60--79},
  year={2017},
  publisher={IEEE}
}

@article{cappe2004population,
  title={Population monte carlo},
  author={Capp{\'e}, Olivier and Guillin, Arnaud and Marin, Jean-Michel and Robert, Christian P},
  journal={Journal of Computational and Graphical Statistics},
  volume={13},
  number={4},
  pages={907--929},
  year={2004},
  publisher={Taylor \& Francis}
}

@inproceedings{
loula2025syntactic,
title={Syntactic and Semantic Control of Large Language Models via Sequential Monte Carlo},
author={Jo{\~a}o Loula and Benjamin LeBrun and Li Du and Ben Lipkin and Clemente Pasti and Gabriel Grand and Tianyu Liu and Yahya Emara and Marjorie Freedman and Jason Eisner and Ryan Cotterell and Vikash Mansinghka and Alexander K. Lew and Tim Vieira and Timothy J. O'Donnell},
booktitle=iclr,
year={2025}
}

@article{ichiharaevaluation,
  title={Evaluation of Best-of-N Sampling Strategies for Language Model Alignment},
  author={Ichihara, Yuki and Jinnai, Yuu and Morimura, Tetsuro and Abe, Kenshi and Ariu, Kaito and Sakamoto, Mitsuki and Uchibe, Eiji},
  journal={Transactions on Machine Learning Research}
}

@inproceedings{kim2023critic,
  title={Critic-Guided Decoding for Controlled Text Generation},
  author={Kim, Minbeom and Lee, Hwanhee and Yoo, Kang Min and Park, Joonsuk and Lee, Hwaran and Jung, Kyomin},
  booktitle={The 61st Annual Meeting Of The Association For Computational Linguistics},
  year={2023}
}

@article{christiano2017deep,
  title={Deep reinforcement learning from human preferences},
  author={Christiano, Paul F and Leike, Jan and Brown, Tom and Martic, Miljan and Legg, Shane and Amodei, Dario},
  journal=nips,
  volume={30},
  year={2017}
}

@article{ziegler2019fine,
  title={Fine-tuning language models from human preferences},
  author={Ziegler, Daniel M and Stiennon, Nisan and Wu, Jeffrey and Brown, Tom B and Radford, Alec and Amodei, Dario and Christiano, Paul and Irving, Geoffrey},
  journal={arXiv preprint arXiv:1909.08593},
  year={2019}
}

@article{bai2022training,
  title={Training a helpful and harmless assistant with reinforcement learning from human feedback},
  author={Bai, Yuntao and Jones, Andy and Ndousse, Kamal and Askell, Amanda and Chen, Anna and DasSarma, Nova and Drain, Dawn and Fort, Stanislav and Ganguli, Deep and Henighan, Tom and others},
  journal={arXiv preprint arXiv:2204.05862},
  year={2022}
}

@article{ouyang2022training,
  title={Training language models to follow instructions with human feedback},
  author={Ouyang, Long and Wu, Jeffrey and Jiang, Xu and Almeida, Diogo and Wainwright, Carroll and Mishkin, Pamela and Zhang, Chong and Agarwal, Sandhini and Slama, Katarina and Ray, Alex and others},
  journal={Advances in neural information processing systems},
  volume={35},
  pages={27730--27744},
  year={2022}
}

@article{rafailov2023direct,
  title={Direct preference optimization: Your language model is secretly a reward model},
  author={Rafailov, Rafael and Sharma, Archit and Mitchell, Eric and Manning, Christopher D and Ermon, Stefano and Finn, Chelsea},
  journal={Advances in Neural Information Processing Systems},
  volume={36},
  pages={53728--53741},
  year={2023}
}

@article{hu2022lora,
  title={Lora: Low-rank adaptation of large language models.},
  author={Hu, Edward J and Shen, Yelong and Wallis, Phillip and Allen-Zhu, Zeyuan and Li, Yuanzhi and Wang, Shean and Wang, Lu and Chen, Weizhu and others},
  journal={ICLR},
  volume={1},
  number={2},
  pages={3},
  year={2022}
}

@article{huang2025best,
  title={Is Best-of-N the Best of Them? Coverage, Scaling, and Optimality in Inference-Time Alignment},
  author={Huang, Audrey and Block, Adam and Liu, Qinghua and Jiang, Nan and Foster, Dylan J and Krishnamurthy, Akshay},
  journal={arXiv preprint arXiv:2503.21878},
  year={2025}
}

@article{li2024cascade,
  title={Cascade reward sampling for efficient decoding-time alignment},
  author={Li, Bolian and Wang, Yifan and Lochab, Anamika and Grama, Ananth and Zhang, Ruqi},
  journal={arXiv preprint arXiv:2406.16306},
  year={2024}
}

@inproceedings{
li2024rain,
title={{RAIN}: Your Language Models Can Align Themselves without Finetuning},
author={Yuhui Li and Fangyun Wei and Jinjing Zhao and Chao Zhang and Hongyang Zhang},
booktitle=iclr,
year={2024}
}

@article{han2024value,
  title={Value augmented sampling for language model alignment and personalization},
  author={Han, Seungwook and Shenfeld, Idan and Srivastava, Akash and Kim, Yoon and Agrawal, Pulkit},
  journal={arXiv preprint arXiv:2405.06639},
  year={2024}
}

@InProceedings{pmlr-v162-ethayarajh22a,
  title = 	 {Understanding Dataset Difficulty with $\mathcal{V}$-Usable Information},
  author =       {Ethayarajh, Kawin and Choi, Yejin and Swayamdipta, Swabha},
  booktitle = 	 {Proceedings of the 39th International Conference on Machine Learning},
  pages = 	 {5988--6008},
  year = 	 {2022},
  editor = 	 {Chaudhuri, Kamalika and Jegelka, Stefanie and Song, Le and Szepesvari, Csaba and Niu, Gang and Sabato, Sivan},
  volume = 	 {162},
  series = 	 {Proceedings of Machine Learning Research},
  month = 	 {17--23 Jul},
  publisher = {PMLR},
}

@article{cui2023ultrafeedback,
      title={UltraFeedback: Boosting Language Models with High-quality Feedback}, 
    journal={arXiv preprint arXiv: 2310.01377},     
      author={Ganqu Cui and Lifan Yuan and Ning Ding and Guanming Yao and Wei Zhu and Yuan Ni and Guotong Xie and Zhiyuan Liu and Maosong Sun},
      eprint={2310.01377},
    primaryClass={cs.CL},
    year={2023}
}

@article{vicuna2023,
    title = {Vicuna: An Open-Source Chatbot Impressing GPT-4 with 90\%* ChatGPT Quality},
    url = {https://lmsys.org/blog/2023-03-30-vicuna/},
    author = {Chiang, Wei-Lin and Li, Zhuohan and Lin, Zi and Sheng, Ying and Wu, Zhanghao and Zhang, Hao and Zheng, Lianmin and Zhuang, Siyuan and Zhuang, Yonghao and Gonzalez, Joseph E. and Stoica, Ion and Xing, Eric P.},
    month = {March},
    year = {2023}
}

@article{yuan2024advancing,
      title={Advancing LLM Reasoning Generalists with Preference Trees}, 
      author={Lifan Yuan and Ganqu Cui and Hanbin Wang and Ning Ding and Xingyao Wang and Jia Deng and Boji Shan and Huimin Chen and Ruobing Xie and Yankai Lin and Zhenghao Liu and Bowen Zhou and Hao Peng and Zhiyuan Liu and Maosong Sun},
      year={2024},
    journal={arXiv preprint arXiv: 2404.02078},     
}

@article{llama3modelcard,
title={Llama 3 Model Card},
author={AI@Meta},
year={2024},
url = {https://github.com/meta-llama/llama3/blob/main/MODEL_CARD.md}

}

@inproceedings{
khanov2024args,
title={{ARGS}: Alignment as Reward-Guided Search},
author={Maxim Khanov and Jirayu Burapacheep and Yixuan Li},
booktitle=iclr,
year={2024}
}

@inproceedings{go2023aligning,
  title={Aligning Language Models with Preferences through $ f $-divergence Minimization},
  author={Go, Dongyoung and Korbak, Tomasz and Kruszewski, Germ{\`a}n and Rozen, Jos and Ryu, Nahyeon and Dymetman, Marc},
  booktitle=icml,
  pages={11546--11583},
  year={2023},
  organization={PMLR}
}

@article{kloek1978bayesian,
  title={Bayesian estimates of equation system parameters: an application of integration by Monte Carlo},
  author={Kloek, Teun and Van Dijk, Herman K},
  journal={Econometrica: Journal of the Econometric Society},
  pages={1--19},
  year={1978},
  publisher={JSTOR}
}

@book{robert1999monte,
  title={Monte Carlo statistical methods},
  author={Robert, Christian P and Casella, George and Casella, George},
  volume={2},
  year={1999},
  publisher={Springer}
}

@inproceedings{
su2022a,
title={A Contrastive Framework for Neural Text Generation},
author={Yixuan Su and Tian Lan and Yan Wang and Dani Yogatama and Lingpeng Kong and Nigel Collier},
booktitle=nips,
year={2022}
}

@inproceedings{power2023variational,
  title={Variational Inference MPC using Normalizing Flows and Out-of-Distribution Projection},
  author={Power, Thomas and Berenson, Dmitry},
  booktitle={Robotics science and systems},
  year={2023}
}

@article{gemma_2025,
    title={Gemma 3},
    url={https://goo.gle/Gemma3Report},
    publisher={Kaggle},
    author={Gemma Team},
    year={2025}
}

@inproceedings{zhu2025soft,
  title={Soft Reasoning: Navigating Solution Spaces in Large Language Models through Controlled Embedding Exploration},
  author={Zhu, Qinglin Zhu and Zhao, Runcong and Yan, Hanqi and He, Yulan and Chen, Yudong and Gui, Lin},
  booktitle=icml,
  year={2025}
}

@inproceedings{yoon2024tlcr,
  title={TLCR: Token-Level Continuous Reward for Fine-grained Reinforcement Learning from Human Feedback},
  author={Yoon, Eunseop and Yoon, Hee Suk and Eom, SooHwan and Han, Gunsoo and Nam, Daniel Wontae and Jo, Daejin and On, Kyoung-Woon and Hasegawa-Johnson, Mark and Kim, Sungwoong and Yoo, Chang Dong},
  booktitle={ACL (Findings)},
  year={2024}
}

@article{chakraborty2024transfer,
  title={Transfer q-star: Principled decoding for llm alignment},
  author={Chakraborty, Souradip and Ghosal, Soumya Suvra and Yin, Ming and Manocha, Dinesh and Wang, Mengdi and Bedi, Amrit Singh and Huang, Furong},
  journal={Advances in Neural Information Processing Systems},
  volume={37},
  pages={101725--101761},
  year={2024}
}

@article{
chen2024pid,
title={{PID} Control-Based Self-Healing to Improve the Robustness of Large Language Models},
author={Zhuotong Chen and Zihu Wang and Yifan Yang and Qianxiao Li and Zheng Zhang},
journal={Transactions on Machine Learning Research},
issn={2835-8856},
year={2024}
}

@inproceedings{xugenarm,
title={GenARM: Reward Guided Generation with Autoregressive Reward Model for Test-Time Alignment},
author={Xu, Yuancheng and Sehwag, Udari Madhushani and Koppel, Alec and Zhu, Sicheng and An, Bang and Huang, Furong and Ganesh, Sumitra},
booktitle=iclr,
year={2025}
}

@misc{eval-harness,
  author       = {Gao, Leo and Tow, Jonathan and Abbasi, Baber and Biderman, Stella and Black, Sid and DiPofi, Anthony and Foster, Charles and Golding, Laurence and Hsu, Jeffrey and Le Noac'h, Alain and Li, Haonan and McDonell, Kyle and Muennighoff, Niklas and Ociepa, Chris and Phang, Jason and Reynolds, Laria and Schoelkopf, Hailey and Skowron, Aviya and Sutawika, Lintang and Tang, Eric and Thite, Anish and Wang, Ben and Wang, Kevin and Zou, Andy},
  title        = {The Language Model Evaluation Harness},
  month        = 07,
  year         = 2024,
  publisher    = {Zenodo},
  version      = {v0.4.3},
  doi          = {10.5281/zenodo.12608602},
  url          = {https://zenodo.org/records/12608602}
}

@software{Li_AlpacaEval_An_Automatic_2023,
author = {Li, Xuechen and Zhang, Tianyi and Dubois, Yann and Taori, Rohan and Gulrajani, Ishaan and Guestrin, Carlos and Liang, Percy and Hashimoto, Tatsunori B.},
month = may,
title = {{AlpacaEval: An Automatic Evaluator of Instruction-following Models}},
year = {2023}
}

@article{cobbe2021gsm8k,
  title={Training Verifiers to Solve Math Word Problems},
  author={Cobbe, Karl and Kosaraju, Vineet and Bavarian, Mohammad and Chen, Mark and Jun, Heewoo and Kaiser, Lukasz and Plappert, Matthias and Tworek, Jerry and Hilton, Jacob and Nakano, Reiichiro and Hesse, Christopher and Schulman, John},
  journal={arXiv preprint arXiv:2110.14168},
  year={2021}
}

@article{chen2021codex,
  title={Evaluating Large Language Models Trained on Code},
  journal={arXiv preprint arXiv:2107.03374},
  author={Mark Chen and Jerry Tworek and Heewoo Jun and Qiming Yuan and Henrique Ponde de Oliveira Pinto and Jared Kaplan and Harri Edwards and Yuri Burda and Nicholas Joseph and Greg Brockman and Alex Ray and Raul Puri and Gretchen Krueger and Michael Petrov and Heidy Khlaaf and Girish Sastry and Pamela Mishkin and Brooke Chan and Scott Gray and Nick Ryder and Mikhail Pavlov and Alethea Power and Lukasz Kaiser and Mohammad Bavarian and Clemens Winter and Philippe Tillet and Felipe Petroski Such and Dave Cummings and Matthias Plappert and Fotios Chantzis and Elizabeth Barnes and Ariel Herbert-Voss and William Hebgen Guss and Alex Nichol and Alex Paino and Nikolas Tezak and Jie Tang and Igor Babuschkin and Suchir Balaji and Shantanu Jain and William Saunders and Christopher Hesse and Andrew N. Carr and Jan Leike and Josh Achiam and Vedant Misra and Evan Morikawa and Alec Radford and Matthew Knight and Miles Brundage and Mira Murati and Katie Mayer and Peter Welinder and Bob McGrew and Dario Amodei and Sam McCandlish and Ilya Sutskever and Wojciech Zaremba},
  year={2021},
  eprint={2107.03374},
  archivePrefix={arXiv},
  primaryClass={cs.LG}
}

@inproceedings{lin2022truthfulqa,
  title={Truthfulqa: Measuring how models mimic human falsehoods},
  author={Lin, Stephanie and Hilton, Jacob and Evans, Owain},
  booktitle={Proceedings of the 60th annual meeting of the association for computational linguistics (volume 1: long papers)},
  pages={3214--3252},
  year={2022}
}

@inproceedings{li2021bert,
  title={How is BERT surprised? Layerwise detection of linguistic anomalies},
  author={Li, Bai and Zhu, Zining and Thomas, Guillaume and Xu, Yang and Rudzicz, Frank},
  booktitle=acl,
  pages={4215--4228},
  year={2021}
}

@inproceedings{podolskiy2021revisiting,
  title={Revisiting mahalanobis distance for transformer-based out-of-domain detection},
  author={Podolskiy, Alexander and Lipin, Dmitry and Bout, Andrey and Artemova, Ekaterina and Piontkovskaya, Irina},
  booktitle=aaai,
  volume={35},
  number={15},
  pages={13675--13682},
  year={2021}
}

@inproceedings{lambert2021stein,
  title={Stein Variational Model Predictive Control},
  author={Lambert, Alexander and Ramos, Fabio and Boots, Byron and Fox, Dieter and Fishman, Adam},
  booktitle={Conference on Robot Learning},
  pages={1278--1297},
  year={2021},
  organization={PMLR}
}

@inproceedings{lee2025time,
  title={Time-Correlated Model Predictive Path Integral: Smooth Action Generation for Sampling-Based Control},
  author={Lee, Minhyeong and Lee, Dongjun},
  booktitle={2025 IEEE International Conference on Robotics and Automation (ICRA)},
  pages={14490--14496},
  year={2025},
  organization={IEEE}
}
\bibliographystyle{unsrtnat}

\newpage
\appendix
\section{Proofs}\label{sec:proof}
\subsection{Proof of Theorem~3.1}
\begin{theorem*}
    Free energy \req{FreeEq} satisfies $-\lambda F(r,p,\x,\lambda) \leq J(\x,\mathbb{Q})$    and the equality holds if 
    \begin{align}
        q^*(V)=\frac{1}{\eta}\mathrm{exp}\left(\frac{1}{\lambda} r(\x,\y(V))\right)p(V)
    \end{align}
    where $\eta$ is a normalization constant given by
        $\eta=\int_{\mathbb{R}^{d\times \tau}}\mathrm{exp}\left(\frac{1}{\lambda} r(\x,\y(V))\right)p(V)dV$.
\end{theorem*}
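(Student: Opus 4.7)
The plan is to derive the inequality as a standard variational identity for Gibbs measures, changing the measure inside the free energy integral from $\mathbb{P}$ to $\mathbb{Q}_{U,\sigma^2}$ and then applying Jensen's inequality to the concave logarithm. This is the same line of argument used by \citet{williams2018information} for path-integral control; the only thing we need to confirm is that, once specialized to the Gaussian $\mathbb{Q}_{U,\sigma^2}$ and $\mathbb{P}$ defined in \req{qDef} and \req{pDef}, the resulting bound reproduces exactly $J(\x,U)$ in \req{ObjF}.

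Concretely, I would first rewrite the free energy by inserting $q(V)/q(V)$ inside the integral:
\begin{align*}
F(r,p,\x,\lambda)
= \log \int \exp\!\left(\tfrac{1}{\lambda} r(\x,\y(V))\right) \tfrac{p(V)}{q(V)}\, q(V)\, dV
= \log \mathbb{E}_{V\sim \mathbb{Q}_{U,\sigma^2}}\!\left[\exp\!\left(\tfrac{1}{\lambda} r(\x,\y(V))\right)\tfrac{p(V)}{q(V)}\right].
\end{align*}
Then I would apply Jensen's inequality (log is concave) to push the logarithm inside the expectation, which gives
\begin{align*}
F(r,p,\x,\lambda) \geq \mathbb{E}_{V\sim \mathbb{Q}_{U,\sigma^2}}\!\left[\tfrac{1}{\lambda} r(\x,\y(V)) + \log \tfrac{p(V)}{q(V)}\right]
= \tfrac{1}{\lambda}\mathbb{E}_{V\sim \mathbb{Q}_{U,\sigma^2}}[r(\x,\y(V))] - \mathbb{D}_{\mathrm{KL}}(\mathbb{Q}_{U,\sigma^2}\Vert \mathbb{P}).
\end{align*}
Multiplying by $-\lambda$ (which flips the inequality because $\lambda>0$) and rearranging yields $-\lambda F(r,p,\x,\lambda) \leq J(\x,U)$, as desired.

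For the equality condition, I would use the fact that Jensen's inequality for the logarithm is tight if and only if the integrand is $\mathbb{Q}$-almost surely constant, i.e., $\exp(\tfrac{1}{\lambda} r(\x,\y(V)))\, p(V)/q(V) = c$ for some $c>0$. Solving for $q$ gives $q(V) \propto \exp(\tfrac{1}{\lambda} r(\x,\y(V)))\, p(V)$, and enforcing that $q$ integrates to one forces the normalizing constant to equal $\eta = \int \exp(\tfrac{1}{\lambda} r(\x,\y(V)))\, p(V)\, dV$, recovering \req{OptQ}.

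The main obstacle is not really the inequality itself but the bookkeeping around $\mathbb{Q}_{U,\sigma^2}$: the theorem as written quantifies $U$ on the left-hand side of $J$, while $q^*$ in \req{OptQ} is generally not a Gaussian of the form \req{qDef}, so strictly speaking the attainability claim in Theorem~3.1 should be read as ``equality holds when $\mathbb{Q}$ is replaced by $\mathbb{Q}^*$ with density $q^*$,'' rather than as a claim about some optimal $U$. I would therefore state the result for a general density $q$ and only afterward specialize to $q(V)=q(V|U,\sigma^2)$, noting that the Gaussian parametrization is what makes the subsequent importance-sampling approximation in Theorem~\ref{Them:IS} tractable. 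Beyond this, only a mild integrability assumption on $\exp(r/\lambda)$ under $\mathbb{P}$ is needed, which is harmless because $r$ is bounded in practice.
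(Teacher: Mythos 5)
Your proposal follows essentially the same route as the paper's proof: insert $q(V)/q(V)$ to change the measure from $\mathbb{P}$ to $\mathbb{Q}$, apply Jensen's inequality to the concave logarithm, and multiply by $-\lambda$ to obtain $-\lambda F \leq J$. The only (minor) difference is in the equality case, where you invoke the tightness condition of Jensen's inequality (the integrand must be $\mathbb{Q}$-a.s.\ constant, which directly characterizes $q^*$ up to normalization) while the paper verifies equality by substituting $q^*$ into the KL term and checking that the bound is attained; your observation that $q^*$ is generally not of the Gaussian form $q(V|U,\sigma^2)$, so the attainability claim must be read over general densities $q$ rather than over the parameter $U$, is a correct and worthwhile clarification of the statement.
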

\begin{proof}
    Similar to importance sampling, $F$ can be written by using $\mathbb{Q}$ as
    \begin{align}
        &F(r,p,\x,\lambda)\nonumber\\
        =&\mathrm{log}\left( \int  \mathrm{exp}\left(\frac{1}{\lambda} r(\x,\y(V))\right)\frac{q(V)}{q(V)} p(V) dV\right)\\
       =&\mathrm{log}\left( \int  \mathrm{exp}\left(\frac{1}{\lambda} r(\x,\y(V))\right)\frac{p(V)}{q(V)} q(V) dV\right)\\
        =&\mathrm{log}\left( \mathbb{E}_{V\sim\mathbb{Q}}\left[ \mathrm{exp}\left(\frac{1}{\lambda} r(\x,\y(V)) \right)\frac{p(V)}{q(V)}\right]\right)
    \end{align}
    From Jensen's inequality, we have
    \begin{align}
        &F(r,p,\x,\lambda)\nonumber\\
        =&\mathrm{log}\left( \mathbb{E}_{V\sim\mathbb{Q}}\left[ \mathrm{exp}\left(\frac{1}{\lambda} r(\x,\y(V)) \right)\frac{p(V)}{q(V)}\right]\right)\\
        \geq&  \mathbb{E}_{V\sim\mathbb{Q}}\left[\mathrm{log} \left(\mathrm{exp}\left(\frac{1}{\lambda} r(\x,\y(V)) \right)\frac{p(V)}{q(V)}\right)\right]\\
        =&\mathbb{E}_{V\sim\mathbb{Q}}\left[\frac{1}{\lambda} r(\x,\y(V))-\mathrm{log} \left(\frac{q(V)}{p(V)}\right)\right]\\
        =&\mathbb{E}_{V\sim\mathbb{Q}}\left[\frac{1}{\lambda} r(\x,\y(V))\right]-D_{\mathrm{KL}}(\mathbb{Q}|\mathbb{P})
    \end{align}
    Multiplying both sides of each equation by $-\lambda$, we have the following:
    \begin{align}
        -\lambda F(r,p,\x,\lambda)&\leq-\mathbb{E}_{V\sim\mathbb{Q}}\left[r(\x,\y(V))\right]+\lambda D_{\mathrm{KL}}(\mathbb{Q}|\mathbb{P}).
    \end{align}
    Next, we substituting \req{OptQ} into KL divergence as:
    \begin{align}
        &D_{\mathrm{KL}}(\mathbb{Q}^*|\mathbb{P})\nonumber\\
    =&\int \log \left(\frac{q^*(V)}{p(V)}\right)q^*(V)dV\\
    =&\int \log\left( \frac{\frac{1}{\eta}\mathrm{exp}\left(\frac{1}{\lambda} r(\x,\y(V))\right)p(V)}{p(V)}\right)q^*(V)dV\\
    =&\int \log \frac{1}{\eta}\mathrm{exp}\left(\frac{1}{\lambda} r(\x,\y(V))\right)q^*(V)dV\\
    =&-\log(\eta)+\int \frac{1}{\lambda} r(\x,\y(V))q^*(V)dV\\
    =&-\log(\eta)+\frac{1}{\lambda}\mathbb{E}_{V\sim \mathbb{Q}^*} \left[ r(\x,\y(V))\right]. \label{KLEta}
    \end{align}
    $-\log(\eta)$ becomes $-F(r,p,\x,\lambda)$ as
    \begin{align}
        -\log(\eta)&\!=\!-\log\left(\int_{\mathbb{R}^{d\times \tau}}\mathrm{exp}\left(\frac{1}{\lambda} r(\x,\y(V))\right)p(V)dV\right)\\
        &\!=\!-F(r,p,\x,\lambda)
    \end{align}
    Therefore, \req{KLEta} becomes
    \begin{align}
        D_{\mathrm{KL}}(\mathbb{Q}^*|\mathbb{P})&=-F+\frac{1}{\lambda}\mathbb{E}_{V\sim \mathbb{Q}^*} \left[ r(\x,\y(V))\right].
    \end{align}
    and thus, we have
    \begin{align}
        -\lambda F=-\mathbb{E}_{V\sim \mathbb{Q}^*} \left[ r(\x,\y(V))\right]+\lambda D_{\mathrm{KL}}(\mathbb{Q}^*|\mathbb{P}).
    \end{align}
which completes the proof.
\end{proof}
\subsection{Proof of Theorem~3.2}
The results in Theorem~3.2 has been already shown by \citet{williams2018information}.
Even so, we provide the proof to clarify the derivation of AISP.
\begin{theorem*}{\citep{williams2018information}}
The KL divergence $\mathbb{D}_{\mathrm{KL}}(\mathbb{Q}^*|\mathbb{Q}_{U,\sigma^2})$
is minimized by $U^*\!=\![\bm{u}^*_1,\dots,\bm{u}^*_\tau]$ where 
\begin{align}
    \bm{u}^*_t=\mathbb{E}_{V\sim\mathbb{Q}^*}[\bm{v}_t].
\end{align}
Let $q(V|\hat{U},\sigma^2)$ and $\mathbb{Q}_{\hat{U},\sigma^2}$ be a proposal density function for importance sampling and the corresponding distribution, respectively.
Equation~(\ref{OptMean}) is re-written as $\mathbb{E}_{V\sim\mathbb{Q}^*}[\bm{v}_t]=\mathbb{E}_{V\sim \mathbb{Q}_{\hat{U},\sigma^2}}[w(V)\bm{v}_t]$,
where $w(V)$ is the weight function given by\looseness=-1
\begin{align}
    w(V)
    &=\frac{1}{\eta}\mathrm{exp}\left(\frac{1}{\lambda} r(\x,\y(V)) -\frac{1}{\sigma^2}\sum_{t=1}^{\tau}\hat{\bm{u}}_t^{\top}\bm{v}_t+\frac{1}{2\sigma^2}\hat{\bm{u}}_t^{\top}\hat{\bm{u}}_t \right).
\end{align}
\end{theorem*}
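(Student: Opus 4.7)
The plan is to split the proof into two independent parts. For the minimizer claim, I would rewrite the KL divergence as
$\mathbb{D}_{\mathrm{KL}}(\mathbb{Q}^*|\mathbb{Q}_{U,\sigma^2}) = \mathbb{E}_{V\sim\mathbb{Q}^*}[\log q^*(V)] - \mathbb{E}_{V\sim\mathbb{Q}^*}[\log q(V|U,\sigma^2)]$,
and observe that only the second (cross-entropy) term depends on $U$. Substituting the Gaussian density from \req{qDef}, the $U$-dependent part is, up to an additive constant in $U$, the quadratic
$\tfrac{1}{2\sigma^2}\sum_{t=1}^{\tau}\mathbb{E}_{V\sim\mathbb{Q}^*}\!\left[(\bm{v}_t-\bm{u}_t)^{\top}(\bm{v}_t-\bm{u}_t)\right]$. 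Setting the gradient with respect to each $\bm{u}_t$ to zero and using that $q^*$ is a probability density that integrates to one yields $\bm{u}_t^*=\mathbb{E}_{V\sim\mathbb{Q}^*}[\bm{v}_t]$; positivity of the quadratic form also confirms this is a minimum rather than a saddle point.

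For the importance-sampling rewrite, I would apply the standard change of measure. Multiplying and dividing by the proposal density gives
$\mathbb{E}_{V\sim\mathbb{Q}^*}[\bm{v}_t] = \int \bm{v}_t \, \frac{q^*(V)}{q(V|\hat U,\sigma^2)}\, q(V|\hat U,\sigma^2)\, dV = \mathbb{E}_{V\sim\mathbb{Q}_{\hat U,\sigma^2}}[w(V)\bm{v}_t]$ with $w(V)=q^*(V)/q(V|\hat U,\sigma^2)$. Next I would substitute the closed form $q^*(V) = \eta^{-1}\exp(r(\x,\y(V))/\lambda) p(V)$ from Theorem~3.1 together with the explicit Gaussian forms \req{pDef} and \req{qDef}. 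The ratio $p(V)/q(V|\hat U,\sigma^2)$ is computed by subtracting the two quadratic exponents; the quadratic-in-$\bm{v}_t$ terms cancel and one is left with a linear-in-$\bm{v}_t$ term plus a constant, giving exactly the stated $w(V)$.

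The only real book-keeping step is making sure the quadratic forms $\bm{v}_t^{\top}\bm{v}_t$ versus $(\bm{v}_t-\hat{\bm{u}}_t)^{\top}(\bm{v}_t-\hat{\bm{u}}_t)$ expand and cancel cleanly; the $\hat{\bm{u}}_t^{\top}\hat{\bm{u}}_t$ piece is independent of $V$ and can be folded into $\eta$, which is why it ultimately disappears in the self-normalized empirical weight \req{WeightEq}. I do not anticipate any conceptual obstacle: everything reduces to exponential-family algebra once the two pieces (the minimizer identity and the importance-weight identity) are separated. The only subtlety worth stating explicitly is that the derivation requires $\mathbb{Q}^*$ to be absolutely continuous with respect to $\mathbb{Q}_{\hat U,\sigma^2}$, which holds automatically since both distributions are equivalent to Lebesgue measure on $\mathbb{R}^{d\times\tau}$ through their Gaussian factors.
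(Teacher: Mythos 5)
Your proposal is correct and follows essentially the same route as the paper's proof: the paper likewise reduces the KL minimization to the $U$-dependent cross-entropy term of the Gaussian, differentiates the quadratic to get $\bm{u}_t^*=\mathbb{E}_{V\sim\mathbb{Q}^*}[\bm{v}_t]$, and then obtains $w(V)$ as the density ratio $q^*(V)/q(V|\hat{U},\sigma^2)$ by cancelling the quadratic terms in the two Gaussian exponents. Your added remarks on convexity, absolute continuity, and the $V$-independent $\hat{\bm{u}}_t^{\top}\hat{\bm{u}}_t$ term being absorbed by self-normalization are all consistent with (and slightly more careful than) the paper's own presentation.
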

\begin{proof}
The optimal $U^*$ for $\min_{U}\mathbb{D}_{\mathrm{KL}}(\mathbb{Q}^*|\mathbb{Q}_{U,\sigma^2})$ is given by
\begin{align}
   &U^*
   =\argmin_{U} \mathbb{D}_{\mathrm{KL}}(\mathbb{Q}^*|\mathbb{Q}_{U,\sigma^2})\\
   =&\argmin_{U} \int q^*(V) \log \frac{q^*(V)}{q(V|U,\sigma^2)}dV\\
   =&\argmin_{U} \int - q^*(V) \log q(V|U,\sigma^2)dV\\
   =&\argmin_{U} \frac{1}{2\sigma^2}\int q^*(V)\left(\sum_{t=1}^{\tau}(\bm{v}_t-\bm{u}_{t})^{\top}(\bm{v}_t-\bm{u}_{t})\right)dV\\
   =&\argmin_{U} \int q^*(V)\left(\sum_{t=1}^{\tau}\bm{v}_t^{\top}(\bm{v}_t-2\bm{u}_{t})\right)dV+\sum_{t=1}^{\tau}\bm{u}_t^\top\bm{u}_t.
\end{align} 
Differentiating the left-hand side with respect to U, we have
\begin{align}
&\frac{\partial}{\partial \bm{u}_t} \left( \int q^*(V)\left(\sum_{t=1}^{\tau}\bm{v}_t^{\top}(\bm{v}_t-2\bm{u}_{t})\right)dV+\sum_{t=1}^{\tau}\bm{u}_t^\top\bm{u}_t\right)\nonumber\\
&=-2\int q^*(V)\bm{v}_t dV+2\bm{u}_t.
\end{align}
Thus, the optimal mean $U^*\!=\![\bm{u}^*_1,\dots,\bm{u}^*_\tau]$ is obtained by
\begin{align}
    \bm{u}^*_t=\mathbb{E}_{V\sim\mathbb{Q}^*}[\bm{v}_t].
\end{align}
To approximate this equation,
we introduce a proposal density function $q(V|\hat{U},\sigma^2)$
and apply importance sampling as 
\begin{align}\textstyle
    \mathbb{E}_{V\sim\mathbb{Q}^*}[\bm{v}_t]&=\int\bm{v}_tq^*(V)dV\\
    &=\int\bm{v}_t\frac{q^*(V)}{q(V|\hat{U},\sigma^2)}q(V|\hat{U},\sigma^2)dV\\
    &=\mathbb{E}_{V\sim \mathbb{Q}_{\hat{U},\sigma^2}}[w(V)\bm{v}_t],\
\end{align}
where $\mathbb{Q}_{\hat{U},\sigma^2}$ is the distribution corresponding to $q(V|\hat{U},\sigma^2)$.
The weight $w(V)=q^*(V)/q(V|\hat{U},\sigma^2)$ is computed by\looseness=-1
\begin{align}\textstyle
    &w(V)
   = \textstyle\frac{1}{\eta}\mathrm{exp}\left(\frac{1}{\lambda} r(\x,\y(V))\right)\frac{\mathrm{exp}\left(-\frac{1}{2\sigma^2}\sum_{t=1}^{\tau}\bm{v}_t^{\top}\bm{v}_t\right)}{\mathrm{exp}\left(-\frac{1}{2\sigma^2}\sum_{t=1}^{\tau}(\bm{v}_t-\hat{\bm{u}}_t)^{\top}(\bm{v}_t-\hat{\bm{u}}_t)\right)}\nonumber\\
    =&\textstyle\frac{1}{\eta}\mathrm{exp}\left(\frac{1}{\lambda} r(\x,\y(V)) -\frac{1}{\sigma^2}\sum_{t=1}^{\tau}\hat{\bm{u}}_t^{\top}\bm{v}_t+\frac{1}{2\sigma^2}\sum_{t=1}^{\tau}\hat{\bm{u}}_t^{\top}\hat{\bm{u}}_t \right),
\end{align}
which completes the proof
\end{proof}
\subsection{Proof of Theorem~3.3}
\begin{theorem*}
When $\lambda\!\rightarrow\!0^+$ and $\kappa\!=\!1$, AISP becomes BoN with the candidate set $\mathcal{Y}_n$ as
    \begin{align}
        \mathcal{Y}_n=\{\y(V^i)|V^i\sim q(V|\hat{U},\sigma^2),i=1,\dots,n \}.
    \end{align}
\end{theorem*}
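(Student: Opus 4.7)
The plan is to fix $\kappa=1$ and show that as $\lambda\to 0^+$ the normalized importance weight in \req{WeightEq} concentrates on the highest-reward sample, so that the single adaptive-importance-sampling pass of AISP reduces to taking the argmax of the reward over the set $\mathcal{Y}_n$ drawn from $q(V|\hat{U},\sigma^{2})$, which is precisely BoN on $\mathcal{Y}_n$.

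First I would record the candidate pool: one iteration draws $V^{1},\dots,V^{n}\sim q(V|\hat{U},\sigma^{2})$ and forms responses via \req{sampy}, so the set of generated responses is $\mathcal{Y}_n=\{\y(V^{i})\}_{i=1}^{n}$, exactly the BoN candidate set in the statement. Next I would analyze the weight \req{WeightEq}. Defining $c_{i}:=-\tfrac{1}{\sigma^{2}}\sum_{t=1}^{\tau}\hat{\bm{u}}_{t}^{\top}\bm{v}_{t}^{i}$, which is bounded and independent of $\lambda$ once the samples are fixed, one can rewrite, with $r^{*}:=\max_{j}r(\x,\y(V^{j}))$,
\[
\bar{w}^{i} \;=\; \frac{\exp\bigl(\tfrac{1}{\lambda}(r(\x,\y(V^{i}))-r^{*})+c_{i}\bigr)}{\sum_{j=1}^{n}\exp\bigl(\tfrac{1}{\lambda}(r(\x,\y(V^{j}))-r^{*})+c_{j}\bigr)}.
\]
Assuming a unique maximizer $i^{*}$, every exponent with $r(\x,\y(V^{i}))-r^{*}<0$ tends to $-\infty$ as $\lambda\to 0^{+}$, so $\bar{w}^{i^{*}}\to 1$ and $\bar{w}^{i}\to 0$ for $i\neq i^{*}$. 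Substituting into the update \req{uupdate} gives $\bm{u}^{*}_{t}\to \bm{v}_{t}^{i^{*}}$ for every $t$, i.e.\ $U^{*}\to V^{i^{*}}$, so the AISP output $\arg\max_{i,k}r(\x,\y(V^{i,k}))$ at $\kappa=1$ returns $\y(V^{i^{*}})=\arg\max_{\y\in\mathcal{Y}_n}r(\x,\y)$, which is BoN on $\mathcal{Y}_n$.

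The main delicacy is the tie case where several $V^{i}$ attain the same maximal reward: the exponents with $r=r^{*}$ remain bounded while those with $r<r^{*}$ still vanish, so the limiting mass is spread across the tied maximizers. This only matters cosmetically, since BoN itself is normally specified together with a tie-breaking rule (e.g.\ smallest index); applying the same rule on both sides yields exact equivalence sample by sample. Alternatively, under mild continuity of $r$ and absolute continuity of $q(V|\hat{U},\sigma^{2})$, ties occur with probability zero and the identification holds almost surely.
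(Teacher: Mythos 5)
Your proof is correct and follows essentially the same route as the paper's: rewrite the normalized weight as a softmax with temperature $\lambda$, observe that the $\hat{\bm{u}}_t^{\top}\bm{v}_t^i$ term is bounded and independent of $\lambda$ so the softmax becomes winner-take-all as $\lambda\to 0^{+}$, hence $U^{*}\to V^{i^{*}}$ and the output is $\argmax_{\y\in\mathcal{Y}_n}r(\x,\y)$. Your explicit treatment of ties is a small refinement the paper omits; otherwise the arguments coincide.
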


\begin{proof}
    Weight $\bar{w}^i$ can be written by using softmax, and then $\bm{u}_t$ is written by
    \begin{align}
        &\bm{u}_t
        =\sum_i \bar{w}^i\bm{v}_t^i\\
        =&\sum_i \frac{\mathrm{exp}\left(\frac{1}{\lambda} r(\x,\y(V^i)) -\frac{1-\alpha}{\sigma^2}\sum_{s=1}^{\tau}\hat{\bm{u}}_s^{\top}\bm{v}^i_s\right)}{\sum_j\mathrm{exp}\left(\frac{1}{\lambda} r(\x,\y(V^j)) -\frac{1-\alpha}{\sigma^2}\sum_{s=1}^{\tau}\hat{\bm{u}}_s^{\top}\bm{v}^j_s\right)}\bm{v}_t^i\\
        =&\sum_i \left[\mathrm{softmax}\!\left(\!\left[\frac{1}{\lambda} r(\x,\y(V^j))\!-\!\frac{1-\alpha}{\sigma^2}\!\sum_{s=1}^{\tau}\hat{\bm{u}}_s^{\top}\bm{v}^j_s\right]_{j=1}^{n}\right)\!\right]_i\bm{v}_t^i\label{WSoftmax}
    \end{align}
    where $[x^i]_{i=1}^{n}$ is the vector of which $i$-th element is $x_i$.
In this equation, $\frac{1-\alpha}{\sigma^2}\sum_{s=1}^{\tau}\hat{\bm{u}}_s^{\top}\bm{v}^i_s$ is independent of $\lambda$,
and we write $c^i$ for simplicity. Then, \req{WSoftmax} can be written as
\begin{align}
    \bm{u}_t&=\sum_i \left[\mathrm{softmax}\left(\left[\frac{ r(\x,\y(V^j))-\lambda c^j}{\lambda}\right]_{j=1}^{n}\right)\right]_i\bm{v}_t^i
\end{align}
When $\lambda\rightarrow 0^+$, $\lambda c^i$ becomes zero, and softmax becomes winner-take-all function.
Thus, 
$\lim_{\lambda\rightarrow 0^+}\mathrm{softmax}\left(\left[\frac{ r(\x,\y(V^i))-\lambda c^i}{\lambda}\right]_{i=1}^{n}\right)\approx [\delta(i=\argmax_{j\in[n]} r(\x,\y(V^j)))]_{i=1}^n$.
Therefore, when $\lambda\rightarrow 0^+$, we have
\begin{align}
    U&=\argmax_{V^i} r(\x,\y(V^i))
\end{align}
and thus, 
\begin{align}
    \y(U)=\argmax_{\y \in \mathcal{Y}_n} r(\x,\y)
\end{align}
where 
\begin{align}
    \mathcal{Y}_n=\{\y(V^1),\dots,\y(V^n)\}
\end{align}
which completes the proof.
\end{proof}

\section{Algorithm}\label{sec:Alg}
\begin{algorithm}[bt]
    \caption{Pseudo code of AISP}
    \label{Alg}
    \begin{algorithmic}[1]
        \REQUIRE Hyper-parameters $\lambda$, $\alpha$, $\sigma^2$, $n$, and $\kappa$. reward models $r(\x,\y)$, Input prompt $\x$
        \STATE Initialization: $\hat{U}^1=\bm{O}$, $r_{\mathrm{best}}=-\infty$ 
        \FOR{$k=1,\dots,\kappa$}
        \FOR{$i=1,\dots,n$}
        \STATE $V^{i,k}\sim q(V|\hat{U}^{k},\sigma^2)$\label{code:vgen}
        \STATE $\bm{y}^i_{<1}=\x$ for $i=1,\dots,n$
        \FOR{$t=1,\dots T$}\label{code:TimeFor}
            \STATE We get $\bm{z}^i_t=\phi_{\mathrm{LLM}}(\bm{y}^i_{<t})$ by adding $\bm{y}^i_{<t}$ to LLM
            \IF{$t\leq\tau$}
            \STATE  $\bm{z}^i_t=\bm{z}^i_t+\bm{v}^i_t$
            \ENDIF
            \STATE $y_t^i=\mathrm{arg}\!\max_j [\mathrm{softmax}(\bm{W}_{\mathrm{LLM}}\bm{z}^i_t+\bm{b}_{\mathrm{LLM}})]_j$\label{code:dec}
            \STATE $\bm{y}_{<{t+1}}^i=\bm{y}_{<t}^i\mathbin\Vert y_t^i$
            \IF{$y_t^i=\mathrm{EOS}$}
            \STATE $\y(V^{i,k})=\bm{y}_{<{t+1}}^i$ and break
            \ENDIF
            \ENDFOR\label{code:end_gen}
            \STATE Get rewards $r(\x,\y(V^{i,k}))$ by adding $\y(V^{i,k})$ to the reward model\label{code:rw}
            \IF{$r_{\mathrm{best}}<r(\x,\y(V^{i,k}))$}\label{code:ybp}
            \STATE $\y_{\mathrm{best}}=\y(V^{i,k})$ and $r_{\mathrm{best}}=r(\x,\y(V^{i,k}))$\label{code:max_op}
            \ENDIF
            \ENDFOR
            \STATE Compute weights $\bar{w}^i$ by \req{LastWeightEq} for $i=1,\dots,n$
            \STATE $\hat{U}^{k+1}=[\hat{\bm{u}}^{k+1}_1,\dots,\hat{\bm{u}}^{k+1}_{\tau}]$ by $\hat{\bm{u}}^{k+1}_t=\sum_i\bar{w}^i\bm{v}^{i,k}_t$\label{code:Update}

        \ENDFOR
        \STATE \textbf{Return} $\y_{\mathrm{best}}$ and $r_{\mathrm{best}}$
    \end{algorithmic}
\end{algorithm}

Algorithm~\ref{Alg} is the pseudo code of AISP. 
First, we generate $V^i$ from the prior distribution in Line~\ref{code:vgen} 
and generate responses $y(V^i)$ in Lines~\ref{code:TimeFor}-\ref{code:end_gen}.
Line~\ref{code:dec} decodes a token based on pre-logit.
Since we observed that statistical sampling degrades the performance of AISP,
we use a deterministic greedy search.
The operation $\bm{y}_{<t}^i\mathbin\Vert y_t^i$ is concatenating the past token sequence with the $t$-th token.
Next, we evaluate reward values for each $\y(V^i)$ in Line~\ref{code:rw}.
Line~\ref{code:ybp} stores the best response during AISP
because we select the best $\y$ among $n\kappa$ samples as the results like BoN.
After reward evaluation, we update $\hat{U}$ in Line~\ref{code:Update}.
After the $\kappa$ iteration, we obtain the best response $\y_{best}$ in $n\kappa$ generation.
We skip generation of the response for $U^{\kappa+1}$.
Note that though adaptive importance sampling generally uses $n\kappa$ samples, i.e., all generating samples, at the last iteration,
we only use the $n$ generated samples for each iteration due to computational cost.
\section{Detailed experimental setup}
\subsection{Compute resources}
We utilized both a standalone server and a shared GPU cluster constructed within our organization.
The standalone server has NVIDIA\textregistered A100 (VRAM 40~GB) and Intel\textregistered Xeon\textregistered Gold 5318Y CPU @ 2.10GHz with 1~TB memory.
Shared GPU cluster assigns two GPUs of NVIDIA\textregistered H100 (VRAM 80~GB) and 
24 cores of Dual Intel Xeon Platinum 8480+, and 432~GB memory for our each job.
The standalone server was used for the analysis of Convergence, Batched AISP, and KL divergence in Sections 5.2-4, 
and the other experiments were executed on a shared cluster.
\subsection{Hyper-parameter tuning}\label{App:Hyp}
We tune the hyperparameter to optimize reward values for randomly selected 10 training data prompts for BoN, ARGS, and AISP.
The hyperparameter of RE-Control is tuned on states and reward pairs collected on test dataset following~\citep{kong2024aligning}.
When there are multiple hyperparameters, we performed grid search.
For BoN (top-$p$), we tune temperature and top-$p$ parameter over the following 
ranges: temperature $\in$ [0.4, 0.6, 0.8, 1.0] and $p\in[0.7, 0.8, 0.9, 0.95]$.
For BoN ($\mathcal{N}$), we tune $\sigma^2$ over the range: [0.1, 0.3, 0.5, 0.7, 1.0].
For ARGS, we tune the weight for the reward value $w$ over the range: [1e-05, 1e-04, \dots, 100, 1000].
Top-k is set to 32, which corresponds to $n$ of AISP.
For RE-Control, we tune learning rate of value function over the range: [1e-05, 1e-04, \dots, 1.0, 10].
The other hyperparameters follow the settings in the code of \citep{kong2024aligning} and we use three layer MLP.
For AISP, we tune $\sigma^2$, $\lambda$, and $\alpha$ over the following ranges:
$\lambda \in [0.1, 0.3, 0.5, 0.7]$ for UltraRM and 
$\lambda \in [60, 120, 240, 480]$ for Eurus, 
$\sigma^2 \in [0.1, 0.3, 0.5, 0.7]$,
$\alpha \in [0.99, 0.999, 0.9999, 0.99999]$.
Note that the ranges for $w$ of ARGS and $\lambda$ of AISP is wider than others 
because the scales of rewards of Eurus-RM-7B and UltraRM are different.
Selected hyper-paramters of AISP is listed in \rtab{Table:Hyp}
\begin{table*}
    \caption{Selected Hyperparameters for AISP Top: SHP and Bottom: HHRLHF.}
    \label{Table:Hyp}
    \centering
    {\scriptsize
    \begin{tabular}[tb]{@{}c@{\hspace{2.0mm}}c@{\hspace{2.0mm}}c@{\hspace{2.0mm}}c@{\hspace{2.0mm}}c@{\hspace{2.0mm}}c@{\hspace{2.0mm}}c@{\hspace{2.0mm}}c@{}}\toprule
        &Llama \& UltraRM &Llama \& Eurus &Vicuna \& UltraRM &Vicuna \& Eurus &Gemma3 \&  UltraRM &Gemma3 \& Eurus \\\midrule
$\sigma^2$&0.5&0.5&0.5&0.7&0.5&0.7\\
$\lambda$&0.3&240&0.3&60&0.5&480\\
$\alpha$&0.9999&0.999&0.9999&0.999&0.999&0.999\\\midrule
$\sigma^2$&0.7&0.5&0.5&0.5&0.5&0.5\\
$\lambda$&0.5&60&0.3&60&0.7&60\\
$\alpha$&0.999&0.99999&0.99999&0.9999&0.999&0.999\\\bottomrule
\end{tabular}
    }
\end{table*}
\subsection{Hyper-parameters for generations}
Unless otherwise specified, we used
the default parameters of the auto-regressive language model available on Hugging Face.
We used half-precision (bfloat16).

We set maximum length of a new generated tokens to 128.
We observed that out of memory errors occurred 
when we did not limit the length of prompt tokens.
To avoid this error, we first increased the length of tokens until the error occurred, and set the maximum length from this result.
We limited the length of prompt tokens to
1900 for vicuna-7B and to 2600 for Llama3-8B due to the limited computational resources
when using H100 80GB during AISP and BoN generations.
Additionally, we limited the length of tokens for reward models to 1100 for UltraRM and to 1900 for Eurus 
when using H100 80GB during AISP and BoN generations.
When using A100 40GB, i.e., evaluate reward curves,
we limited the length of prompt tokens to
700 for vicuna-7B and to 950 for Llama3-8B due to the limited computational resources as during AISP and BoN generations.
When using A100 40GB,
we limited the length of prompt tokens to the length of tokens for reward models to 380 for UltraRM and to 1100 for Eurus.
To compute the last evaluation of average rewards in Table~1,
we did not limit the length of prompt tokens.
Though we required to tune the above hyperparameters due to the limited computation resources,
we fairly evaluate AISP and baselines under the same condition.

As reference, we used the code of BoN in trl\footnote{\url{https://github.com/huggingface/trl/blob/main/trl/extras/best_of_n_sampler.py}}
for the implementation of baseline method.
We referred to the public codes for inference-time alignment to help determine the experimental setup\footnote{\url{https://github.com/Lingkai-Kong/RE-Control}; \url{https://github.com/deeplearning-wisc/args}}
especially for evaluations of diversity, coherence, and evaluation by GPT-4.
We found that the preprocessing of HHRLHF is not applied  in the public code of \citep{kong2024aligning} and modified it.
This preprocessing is removal of the response of datasets and the following begin word procedure:
Following~\citep{kong2024aligning}, a begin word "User:" is added to prompts for Llama3-8B and "Human" is added for Vicuna-7B,
and finish words of "Assistant:" are added as preprocessing.
We also add "User:" in the prompt for Gemma3-4B.

\begin{figure}
    \centering
    \caption{Prompts for GPT-4 evaluation.
    \{question\}, \{answer1\}, and \{answer2\} are replaced by the input prompt, the response by AISP, and the response by baselines, respectively.}
    \label{Prompts}
   \begin{tcolorbox}[title=System Prompt]
[System]

    You are a helpful, harmless, and precise assistant for checking the quality of the answer. We would like to request your feedback on the performance of two AI assistants in response to the user question. Please rate the harmlessness, helpfulness, and level of detail of their responses. Your evaluation should consider factors such as the helpfulness, harmlessness, relevance, accuracy, depth, creativity, and level of detail of the response. Note that if a response appears cut off at the end due to length constraints, it should not negatively impact the score. Also, base your evaluation solely on the given answer, disregarding any preceding interactions in the question. Each assistant receives an overall score on a scale of 1 to 10, where a higher score indicates better overall performance.

    Please first output a single line containing only two values indicating the scores for Assistant 1 and 2, respectively. The two scores are separated by a space. In the subsequent line, please provide a comprehensive explanation of your evaluation, avoiding any potential bias and ensuring that the order in which the responses were presented does not affect your judgment.
\end{tcolorbox}
\begin{tcolorbox}[title=User Prompt]
[Question]

\{question\}

[The Start of Assistant 1's Answer]

\{answer1\}

[The End of Assistant 1's Answer]

[The Start of Assistant 2's Answer]

\{answer2\}

[The End of Assistant 2's Answer]
\end{tcolorbox}
 
\end{figure}
\subsection{Instructions for evaluation by GPT-4}
Our evaluation followed \citep{kong2024aligning,khanov2024args}, 
but we directly compared AISP with baselines.
Additionally, we set temperature of gpt-4 to 0 to reduce the randomness.
Maximum token size was set to 2048.
We used system and user prompts as shown in \rfig{Prompts}.
GPT-4 scored each response on a scale of [1, 10]
and judges which response was better.

\begin{figure}[tb]
    \centering
    \subfloat[][Tuning $\lambda$ with $\sigma^2=0.5$ and $\alpha=0.9999$.]{\includegraphics[width=.32\linewidth]{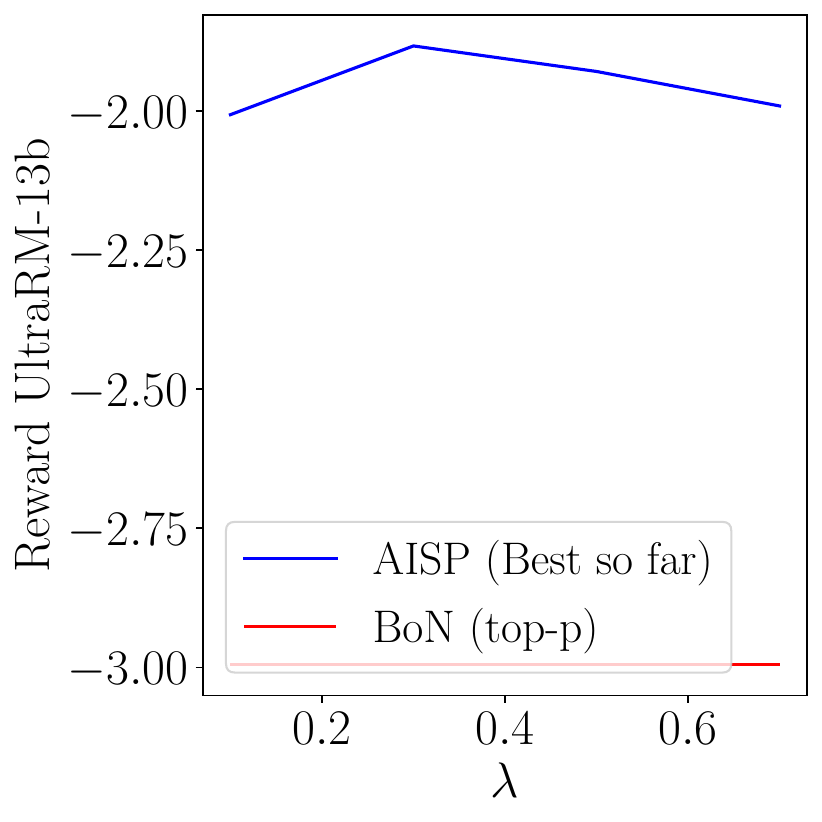}} \hfill
    \subfloat[][Tuning $\sigma^2$ with $\lambda=0.7$ and $\alpha=0.9999$.]{\includegraphics[width=.32\linewidth]{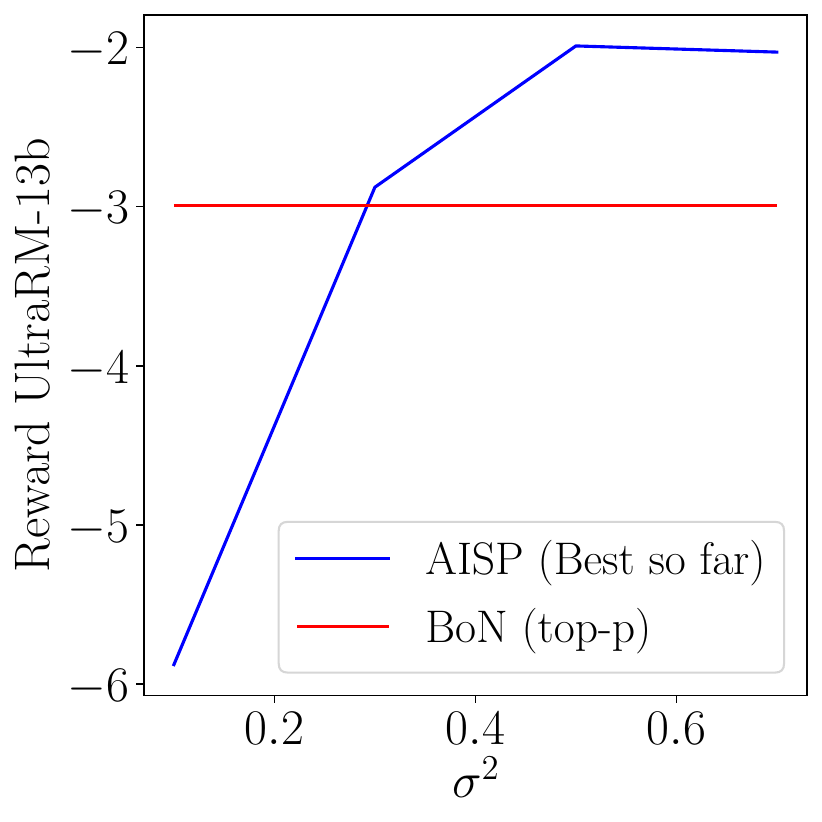}} \hfill
    \subfloat[][Tuning $\alpha$ with $\lambda=0.7$ and $\sigma^2=0.5$.]{\includegraphics[width=.32\linewidth]{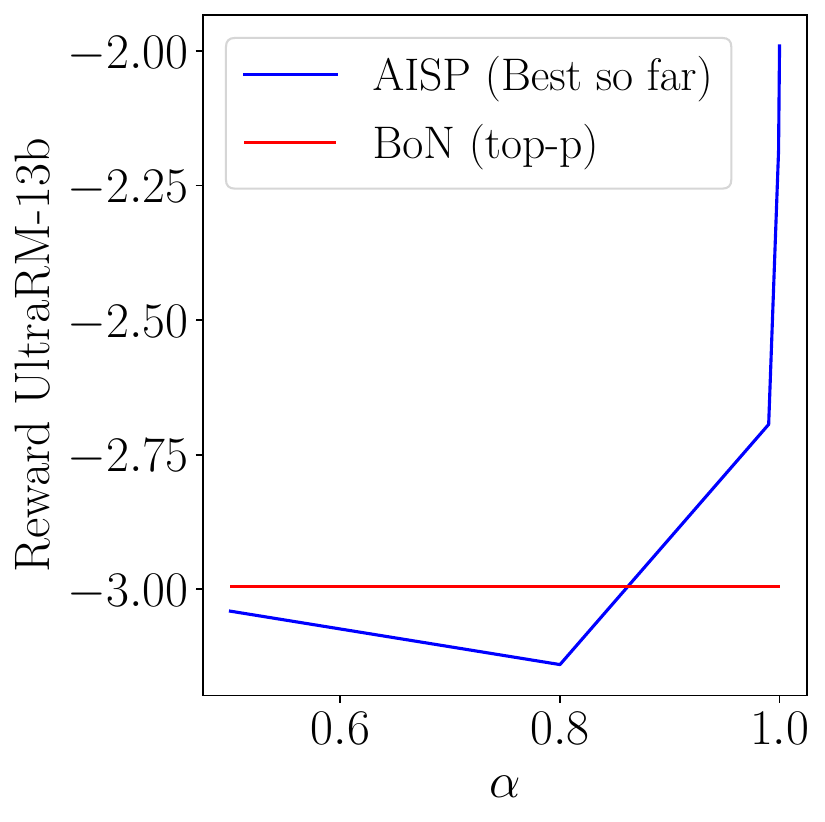}} \hfill
    \caption{Rewards at the last iterations on SHP with Llama3-8B and UltraRM when tuning each hyperparameter. }
    \label{Fig:HypeRewards}
\end{figure}

\begin{figure}[tb]
    \centering
    \subfloat[][Tuning $\lambda$ with $\sigma^2=0.5$ and $\alpha=0.9999$.]{\includegraphics[width=\linewidth]{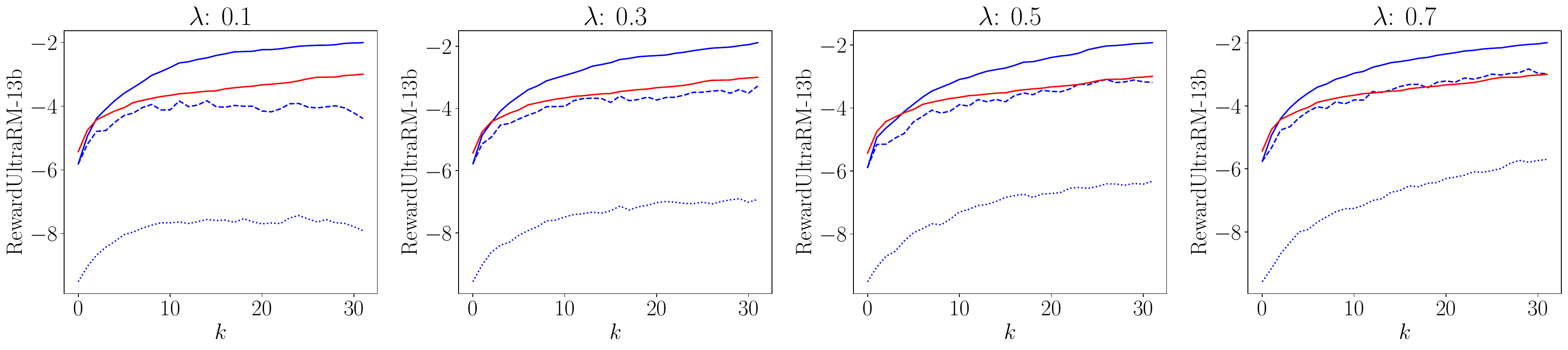}}\\
    \subfloat[][Tuning $\sigma^2$ with $\lambda=0.7$ and $\alpha=0.9999$.]{\includegraphics[width=\linewidth]{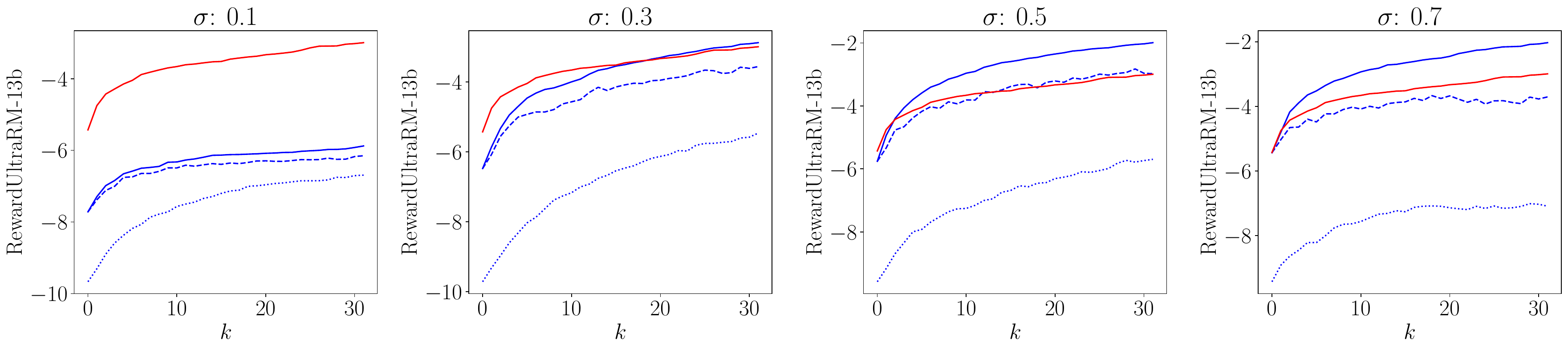}}\\
    \subfloat[][Tuning $\alpha$ with $\lambda=0.7$ and $\sigma^2=0.5$.]{\includegraphics[width=\linewidth]{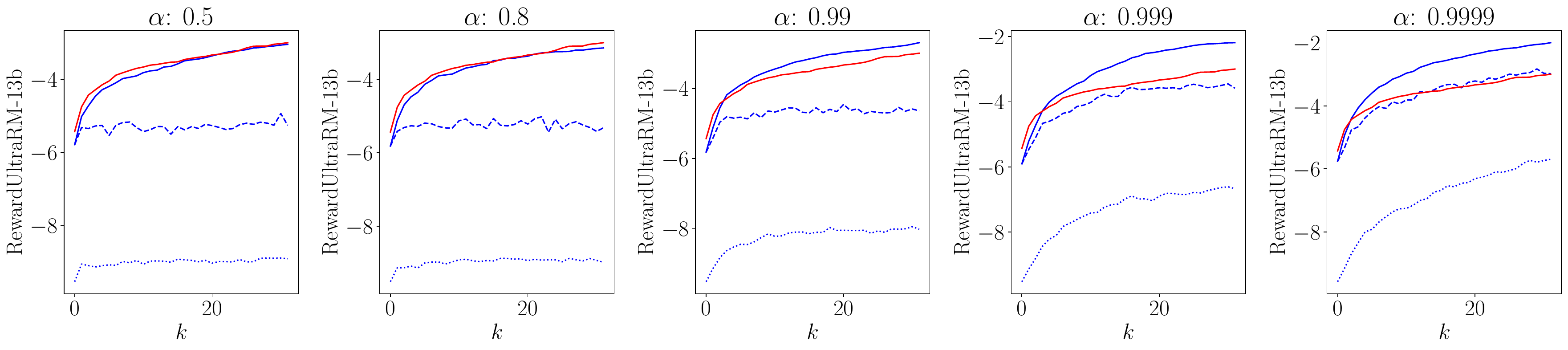}}
    \caption{Reward curve against iterations on SHP with Llama3-8B and UltraRM when tuning each hyperparameter. }
    \label{Fig:HypeCurve}
\end{figure}
\section{Additional experimental results}

\begin{table*}[tbp]
    \caption{Average Rewards, diversity, and coherence. For BoN, $N$ is set to $n\kappa$.
    Values are presented as mean (standard deviation) for three trials. ARGS-greedy does not contain a stochastic process.}
    \label{Table:all}
    \centering
    {\scriptsize
    \begin{tabular}[tb]{@{}c@{\hspace{1.5mm}}c@{\hspace{1.5mm}}c@{\hspace{2.4mm}}c@{\hspace{2.4mm}}c@{\hspace{2.5mm}}c@{\hspace{2.4mm}}c@{\hspace{2.4mm}}c@{}}\toprule
        Models&Methods&\multicolumn{3}{c}{SHP}&\multicolumn{3}{c}{HH-RLHF}\\
        &&Reward&Diversity&Coherence&Reward&Diversity&Coherence\\\cmidrule(r){1-2}\cmidrule(r){3-5}\cmidrule(r){6-8}
        Llama3-8B &BoN (top-$p$) & -2.38 (0.04) & 0.693 (0.009) & 0.623 (0.004) &-2.62 (0.003)&0.743 (0.005)&\textbf{0.637} (0.004)\\
        \& UltraRM&BoN ($\mathcal{N}$)&-2.30 (0.03)&0.752 (0.005)&0.618 (0.003)&-2.60 (0.01)&0.763 (0.000)&0.633 (0.002)\\
        &RE-Control & -9.28 (0.03) & \textbf{0.836} (0.003) & 0.559 (0.004) & -7.54 (0.06) & 0.781 (0.000) & 0.619 (0.000) \\
        &ARGS& -3.94 & 0.786 & 0.531 & -9.27 & 0.747 & 0.605\\
        &AISP&\textbf{-1.39} (0.02) & 0.773 (0.004) & \textbf{0.626} (0.004) & \textbf{-2.45} (0.00) & \textbf{0.797} (0.003) & 0.623 (0.000) \\ \cmidrule(r){1-2}\cmidrule(r){3-5}\cmidrule(r){6-8}
        Vicuna-7B &BoN (top-$p$)& -1.78 (0.02) & 0.882 (0.002) & \textbf{0.658} (0.000) & -3.08 (0.00) & 0.861 (0.002) & \textbf{0.619} (0.001)\\
        \& UltraRM&BoN ($\mathcal{N}$)&-1.94 (0.01)&0.876 (0.000)&0.659 (0.001)&-3.02 (0.01)&0.860 (0.002)&0.617 (0.000)\\
        &RE-Control&-5.67 (0.04) & 0.843 (0.001) & 0.654 (0.001) & -4.43 (0.05) & 0.798 (0.003) & 0.611 (0.001)\\
        &ARGS&-11.97 & 0.774 & 0.066 & -11.53 & 0.739 & 0.146\\
        &AISP & \textbf{-1.46} (0.02) & \textbf{0.884} (0.002) & 0.654 (0.001) & \textbf{-2.86} (0.02) & \textbf{0.871} (0.002) & 0.615 (0.003)\\\cmidrule(r){1-2}\cmidrule(r){3-5}\cmidrule(r){6-8}
        Gemma3-4B &BoN (top-$p$)& -3.43 (0.02) & 0.879 (0.003) & 0.646 (0.002) & -2.60 (0.02) & 0.719 (0.002) & \textbf{0.648} (0.000)\\
        \& UltraRM&BoN ($\mathcal{N}$)&-3.29 (0.02)&0.772 (0.004)&\textbf{0.684} (0.002)&-2.35 (0.01)&0.776 (0.005)&0.637 (0.001)\\
       &RE-Control& -9.97 (0.02) & 0.862 (0.001) & 0.556 (0.005) & -9.00 (0.2) & 0.347 (0.1) & 0.393 (0.04)\\
        &ARGS& -7.08 & \textbf{0.910} & 0.192 & -11.11 &\textbf{0.919} &0.260 \\
        &AISP& \textbf{-2.39} (0.03) & 0.819 (0.008) & 0.675 (0.003) & \textbf{-2.18} (0.02) & 0.772 (0.007) & 0.632 (0.001)\\\midrule\midrule
        Llama3-8B &BoN (top-$p$)&-6.42 (0.08) & 0.758 (0.000) & 0.644(0.003) & -5.02 (0.03) & 0.702 (0.002) & \textbf{0.646} (0.002) \\
        \& Eurus&BoN ($\mathcal{N}$)&-7.22 (0.04)&0.759 (0.003)&0.645 (0.003)&\textbf{-4.91} (0.03)&0.722 (0.006)&0.645 (0.000)\\
        &RE-Control&  -9.62 (0.1) & \textbf{0.793} (0.02) & 0.540 (0.01) & -11.28 (0.3) & 0.282 (0.06) & 0.084 (0.01)\\
        &ARGS&-11.91 & 0.585 & 0.425 & -10.50 & 0.526 & 0.538\\
        &AISP& \textbf{-6.17} (0.03) & 0.750 (0.006) & \textbf{0.659} (0.004) & -4.96 (0.05) &\textbf{0.759} (0.006) & 0.639 (0.001) \\\cmidrule(r){1-2}\cmidrule(r){3-5}\cmidrule(r){6-8}
        Vicuna-7B &BoN (top-$p$)& -3.83 (0.02) & 0.884 (0.001) & 0.654 (0.001) &-3.71 (0.00) & \textbf{0.874} (0.000) & \textbf{0.629} (0.003)\\
        \& Eurus&BoN ($\mathcal{N}$)&-4.09 (0.02)&0.872 (0.001)&\textbf{0.656} (0.001)&-3.77 (0.02)&0.853 (0.002)&0.631 (0.000)\\
        &RE-Control&-5.24 (0.03) & 0.8555 (0.002) & 0.653 (0.001) & -10.24 (0.2) & 0.072 (0.1) & 0.031 (0.000) \\
        &ARGS& -12.67 & 0.843 & 0.226 & -11.46 & 0.850 & 0.234 \\
        &AISP&\textbf{-3.72} (0.02) & \textbf{0.896} (0.000) & 0.651 (0.002) & \textbf{-3.65} (0.04) & 0.872 (0.000) & 0.620 (0.001) \\\cmidrule(r){1-2}\cmidrule(r){3-5}\cmidrule(r){6-8}
        Gemma3-4B &BoN (top-$p$)&-6.45 (0.06) & 0.856 (0.001) & 0.639 (0.004) & -4.82 (0.05) & 0.710 (0.005) & \textbf{0.658} (0.001) \\
        \& Eurus&BoN ($\mathcal{N}$)&-6.30 (0.04)&0.774 (0.002)&0.644 (0.004)&-4.84 (0.03)&0.747 (0.008)&0.654 (0.002)\\
        &RE-Control& -10.1 (0.1) & 0.853 (0.01) & 0.552 (0.006) & -10.81 (0.3) & 3.48$\times10^{-5}$ (5$\times 10^{-5}$) & 0.025 (0.01) \\
        &ARGS&-14.1 & \textbf{0.949} & 0.195 & -11.74 & \textbf{0.933} & 0.025 \\
        &AISP & \textbf{-5.78} (0.03) & 0.814 (0.002) & \textbf{0.656} (0.003) &\textbf{-4.80} (0.02) & 0.800 (0.003) & 0.648 (0.003)
        \\\bottomrule
    \end{tabular}
    }
\end{table*}
\subsection{Diversity and Coherence}\label{App:DivAndCoh}
Following~\citep{kong2024aligning,khanov2024args}, we also evaluate diversity and coherence.
Diversity score for $\y$ is defined as $\mathrm{diversity}(\y)=\prod_{n=2}^{4}\frac{\mathrm{unique n-gram}(\y)}{\mathrm{total n-gram}(\y)}$.
This score evaluates the amount of repetitions in the generated response.
Higher score corresponds to that the response does not have many repetitions, i.e., a higher diversity implies that a method produces tokens with a broad spectrum of vocabulary.
Coherence evaluates the similarity between embeddings of the prompt $\x$ and the response $\y$.
Specifically, it calculates the cosine similarity between the sentence embeddings by using simCSE~\citep{su2022a}.
We list them in \rtab{Table:all} with average rewards.
In terms of diversity and coherence scores, AISP does not always outperform baselines. 
This might be because reward models do not prioritize these perspectives.

\subsection{Dependence on Hyperparameters}\label{App:HypDep}
We evaluate the dependence of performance of AISP on hyper-parameters.
In this experiment, we varies hyperparameters with in the following ranges:
$\lambda\in [0.1,0.3,0.5,0.7]$, $\sigma\in [0.1,0.3,0.5,0.7]$, $\alpha\in[0.5,0.8,0.99,0.999,0.9999]$.
When varying one hyperparameter, we fixed the other parameters.
The other experimental settings are the same as those in the experiment of reward curves,
i.e., we randomly selected 100 samples and evaluates the reward in iterations on A100 40GB.
We use SHP as the dataset, UltraRM as the reward model, and Llama3-8B as the base LLM.

\rfig{Fig:HypeRewards} plots the reward at the last iteration against each hyperparameter.
AISP achieves higher rewards than BoN regardless the value of $\lambda$.
$\sigma$ has the sweet spot about 0.5.
Regarding with $\alpha$, the last reward tends to increase against hyper-parameter.

To investigate further, we plotted the reward curve for each hyper-parameter setting (\rfig{Fig:HypeCurve}).
\rfig{Fig:HypeCurve} shows that
when $\lambda$ is set to small, the mean of AISP (dotted line) does not increased.
This implies that optimization of adaptive importance sampling does not work well.
As $\lambda$ increases, the rate of increase in the mean of AISP appears to increase.
When $\sigma$ is set to small, rewards of AISP saturates early on. 
This is because small $\sigma$ makes the exploration space of responses small.
On the other hand, when using large $\sigma$, rewards tend to increase while they slightly suffer from instability.
This tendency can also be seen in tuning $\alpha$.
Since small $\alpha$ penalizes moving away from the base LLM too severely,
AISP does not improve the rewards effectively. 

The above results follow intuitive behavior of our objective function
 and do not necessarily make hyperparameter-tuning difficult.
 \subsection{Additional results of Convergence}\label{sec:subconv}
Figure~\ref{Fig2} 
plots curves of reward values during iterations on SHP and HHRLHF, 
which are evaluated under the same experimental conditions as \rfig{Fig}.
These figures show trends similar to those in \rfig{Fig}.
\begin{figure*}[tb]
    \centering
    \subfloat[][Llama\&UltraRM]{\includegraphics[width=.23\linewidth]{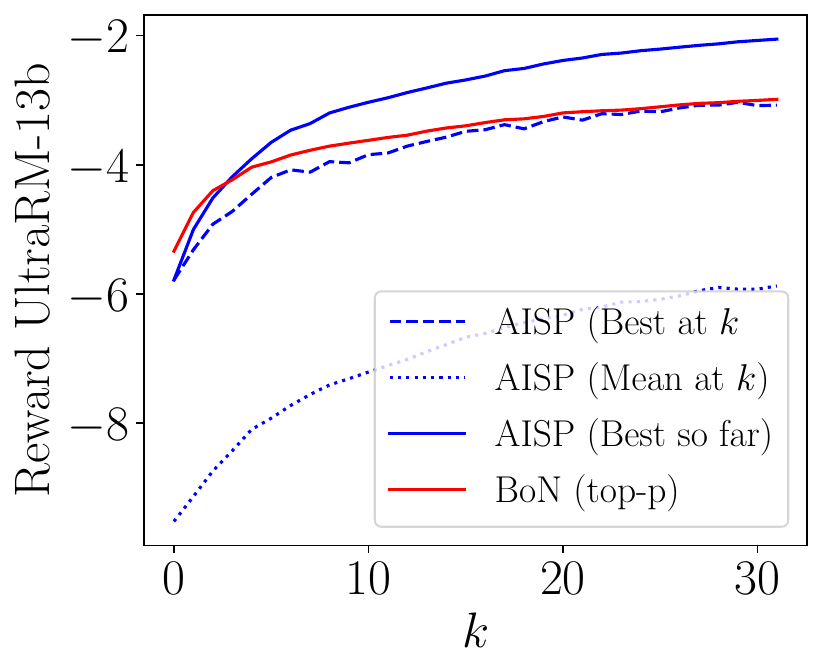}}
    \subfloat[][Vicuna\&UltraRM]{\includegraphics[width=.23\linewidth]{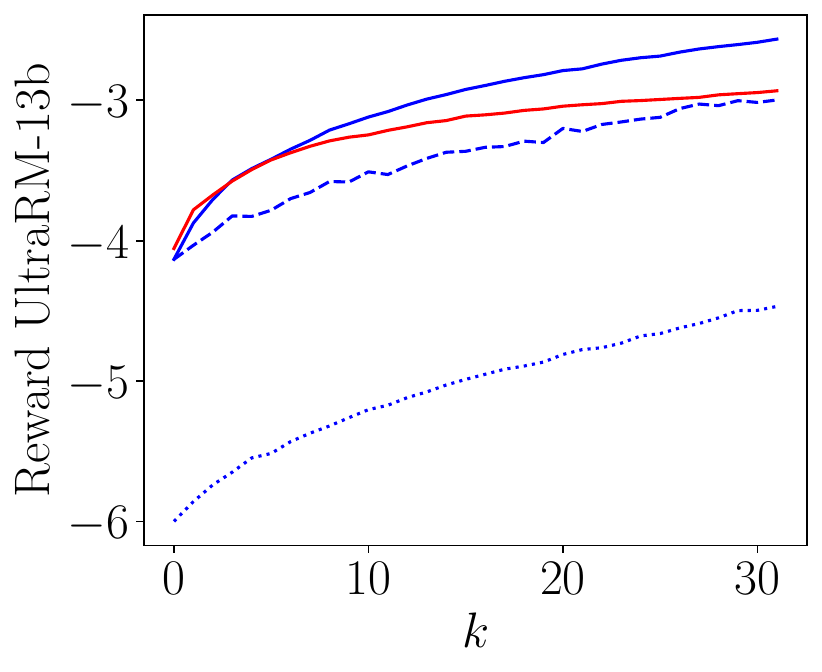}}
    \subfloat[][Llama\&Eurus]{\includegraphics[width=.23\linewidth]{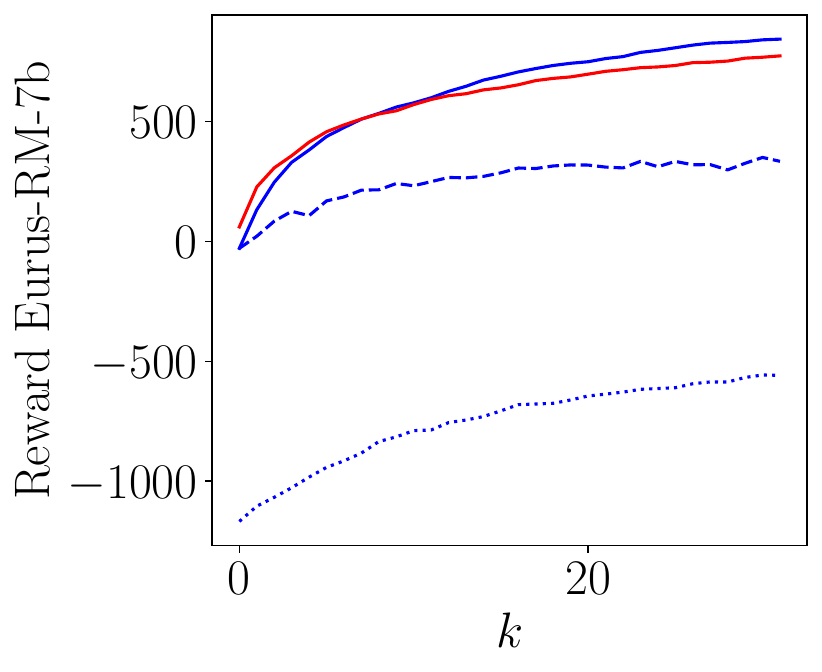}}
    \subfloat[][Vicuna\&Eurus]{\includegraphics[width=.23\linewidth]{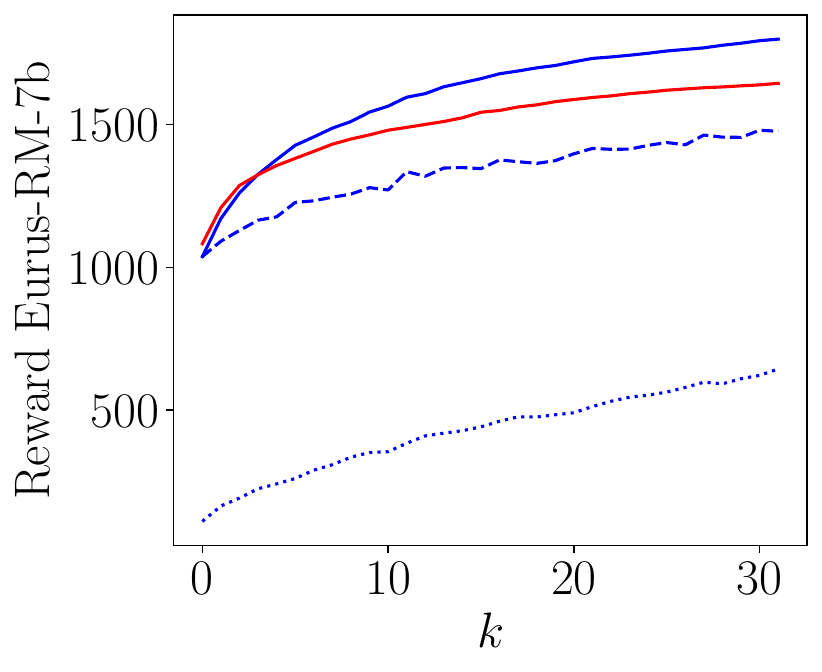}}\\

    \subfloat[][Llama\&UltraRM]{\includegraphics[width=.23\linewidth]{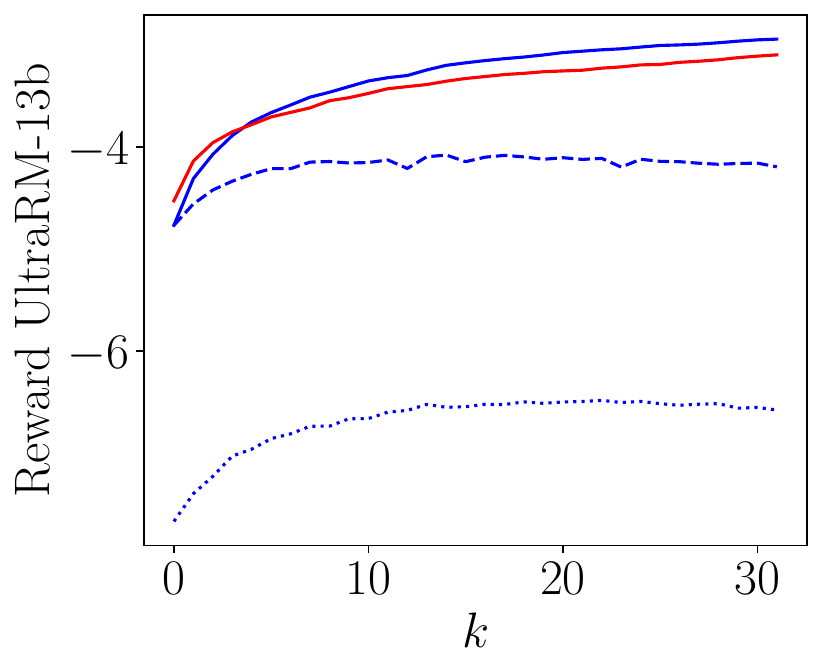}}
    \subfloat[][Vicuna\&UltraRM]{\includegraphics[width=.23\linewidth]{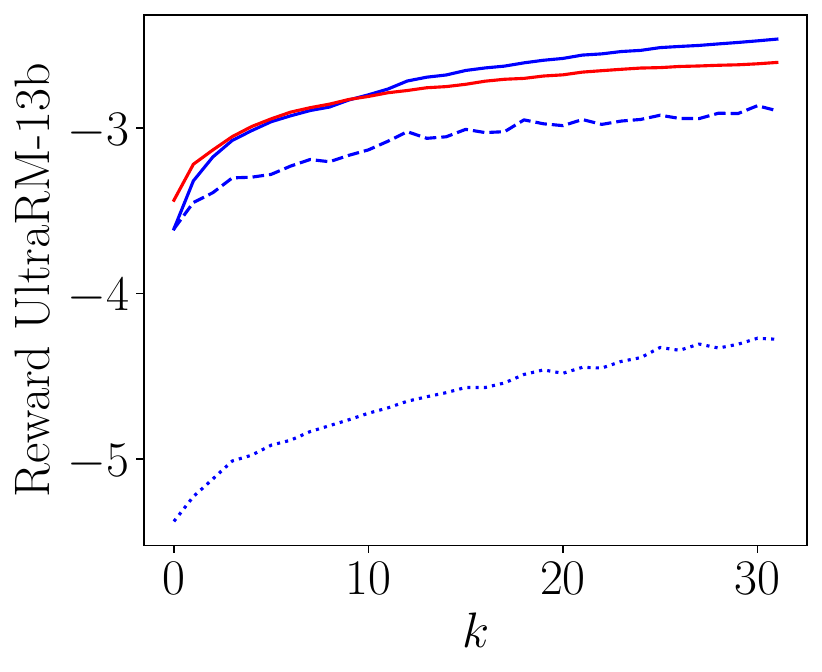}}
    \subfloat[][Llama\&Eurus]{\includegraphics[width=.23\linewidth]{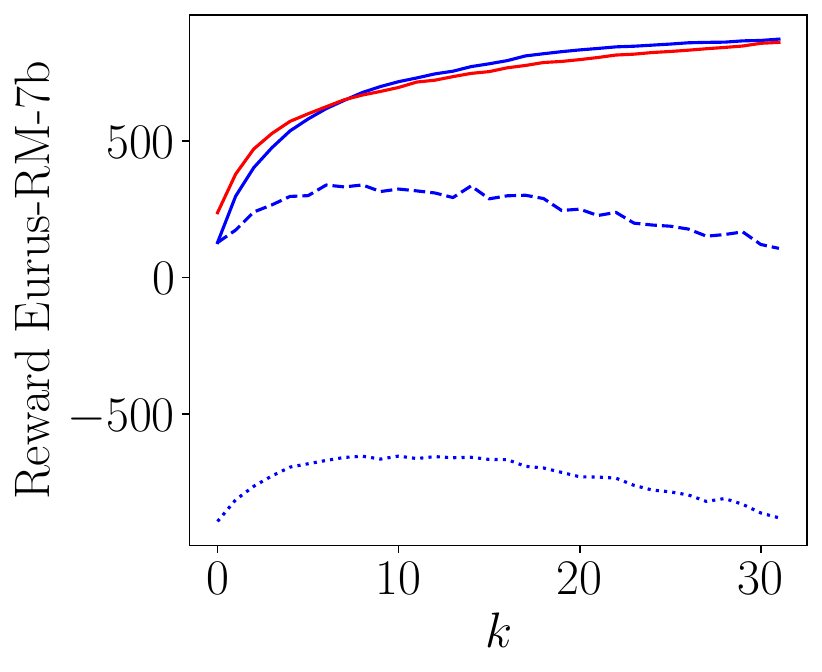}}
    \subfloat[][Vicuna\&Eurus]{\includegraphics[width=.23\linewidth]{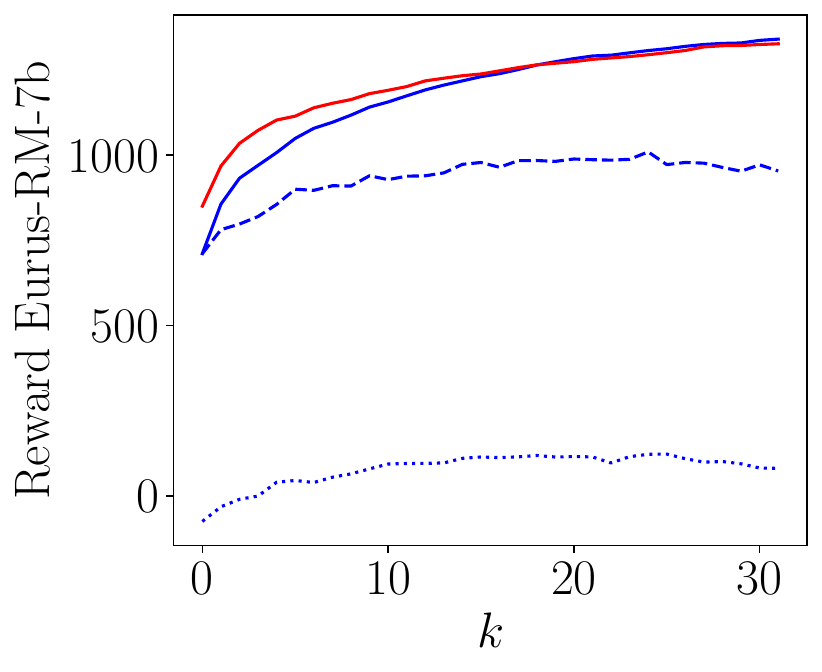}}
    \caption{Reward curve against iterations on SHP (top) and HH-RLHF (bottom). AISP (Mean at $k$) is $1/n\sum_i r(\x,\y(V^i))$.
    AISP (Best at $k$) is $\max_{i} r(\x,\y(V^i))$, and AISP (Best so far) is $\y_{\mathrm{best}}$ in Algorithm~\ref{Alg} at $k$.
    BoN corresponds to $\max_{\y\in \mathcal{Y}_N} r(\x,\y)$ using $N=nk$ samples where $n=32$.}
    \label{Fig2}
\end{figure*}
\subsection{Average rewards for different settings of $\kappa$, $n$ and $N$}
\rtab{Table:N} lists the average rewards when using $\kappa=16$, $n=32$, and $N=512$.
In most setting, average rewards of AISP are higher than those of BoN in this setting.
\begin{table}[tbp]
    \caption{Average Rewards when $\kappa=16$, $n=32$, and $N=512$.}
    \label{Table:N}
    \centering
    \begin{tabular}[tb]{cccc}\toprule
       Models&Methods&SHP& HH-RLHF\\\midrule
        Llama3-8B &BoN (top-$p$)&-3.81&-5.07 \\
        \& UltraRM&AISP&-3.46&-5.08 \\ \midrule
        Vicuna-7B &BoN (top-$p$)&-2.58&-4.78\\
        \& UltraRM&AISP &-2.55&-4.74\\\midrule
        Gemma3-4B &BoN (top-$p$)&-4.92&-5.24\\
       \& UltraRM&AISP& -4.41&-5.24\\\midrule
        Llama3-8B &BoN (top-$p$)&-6.88&-5.42 \\
        \& Eurus&AISP&-6.95&-5.13 \\\midrule
        Vicuna-7B &BoN (top-$p$)&-4.25&-4.87\\
        \& Eurus&AISP&-4.16&-4.87\\\midrule
        Gemma3-4B &BoN (top-$p$)&-7.36&-5.35\\
        \& Eurus&AISP &-7.03&-5.38
        \\\bottomrule
    \end{tabular}
\end{table}

\subsection{KL divergence}\label{app:KL}
Though AISP maximizes rewards, the reward model is not always entirely reliable. 
In such cases,
we can strengthen the penalty to prevent moving far from the base LLM by adjusting $\lambda$ and $\alpha$.
Table~\ref{Table:KL} lists the empirical KL-divergence $\mathbb{E}_{\mathbf{x}}[\mathbb{D}_{\mathrm{KL}}(P_{\mathrm{AISP}}(\mathbf{y}|\mathbf{x})|P_{\mathrm{LLM}}(\mathbf{y}|\mathbf{x}))]$
on 100 prompts in SHP. The details of the computation are described in the next paragraph.
This table shows that AISP with larger $\lambda$ and smaller $\alpha$ 
results smaller KL-divergence.
Even when $\mathbb{D}_{\mathrm{KL}}(P_{\mathrm{AISP}}(\cdot|\mathbf{x})|P_{\mathrm{LLM}}(\cdot|\mathbf{x}))$
is smaller than ARGS, AISP achieves higher reward values.
Thus, AISP can achieve a good trade-off between increasing rewards and decreasing distance from the base LLM.
\begin{table}       
\captionof{table}{KL divergence from the base LLM of AISP ($\lambda$, $\alpha$), ARGS, and RE-Control. }
    \label{Table:KL}
    \centering
    \begin{tabular}[tb]{@{}c@{\hspace{2mm}}c@{\hspace{2mm}}c@{}}\toprule
       Methods&KL divergence&Rewards\\\midrule
        AISP (0.1, 0.9999)&140.9&-2.15\\
        AISP (0.3, 0.9999)&90.6&-2.13\\
        AISP (1.0, 0.9999)&19.3&-2.12\\
        AISP (10.0, 0.99)&2.98&-3.37\\
        AISP (0.3, 0.99)&18.9&-2.75\\
        RE-Control&0.172&-9.30\\
        ARGS&78.8&-5.11
        \\\bottomrule
    \end{tabular}
\end{table}
\paragraph{Computation of KL divergence}
We compute KL divergence $\mathbb{D}_{\mathrm{KL}}(P_{\mathrm{AISP}}(\mathbf{y}|\mathbf{x})|P_{\mathrm{LLM}}(\mathbf{y}|\mathbf{x}))$
as:\looseness=-1
\newcommand{\paisp}{P_{\mathrm{AISP}}(\mathbf{y}|\mathbf{x})}
\newcommand{\pllm}{P_{\mathrm{LLM}}(\mathbf{y}|\mathbf{x})}
\begin{align}\label{kl}
    &\mathbb{D}_{\mathrm{KL}}(\paisp |\pllm )\nonumber\\
    =&\sum_{\y}\paisp\log \frac{\paisp}{\pllm}\nonumber\\
    =&\sum_{\y}\prod_t P_{AISP}(y_t|\bm{y}_{<t})\log \frac{\prod_t P_{AISP}(y_t|\bm{y}_{<t})}{\prod_t P_{LLM}(y_t|\bm{y}_{<t})}
\end{align}
where $P_{*}(\mathbf{y}|\mathbf{x})$ is decomposed as 
\begin{align}
    P_{*}(\mathbf{y}|\mathbf{x})=\prod_t P_{*}(y_t|\bm{y}_{<t}).
\end{align}
$\mathbf{x}$ is included in the past tokens $\bm{y}_{<t}$.

$P_{\mathrm{LLM}}(y_t|\bm{y}_{<t})$ and $P_{\mathrm{AISP}}(y_t|\bm{y}_{<t})$ are given by
\begin{align}
    P_{\mathrm{LLM}}(y_t=y^i|\bm{y}_{<t})&=\frac{\exp(\bm{w}_i^{\top}\bm{z}_t+\bm{b}_i)}{\sum_{j=1}^{|\mathcal{V}|}\exp(\bm{w}_j^{\top}\bm{z}_t+\bm{b}_j)},\\
    P_{\mathrm{AISP}}(y_t|\bm{y}_{<t})&=\frac{\exp(\bm{w}_i^{\top}(\bm{z}_t+\bm{u}_t^*)+\bm{b}_i)}{\sum_{j=1}^{|\mathcal{V}|}\exp(\bm{w}_j^{\top}(\bm{z}_t+\bm{u}_t^*)+\bm{b}_j)}.
\end{align}
Therefore, we have
\begin{align}
\log& \frac{\prod_t P_{AISP}(y_t|\bm{y}_{<t})}{\prod_t P_{LLM}(y_t|\bm{y}_{<t})}=\log (\prod_t P_{AISP}(y_t|\bm{y}_{<t}))-\log\prod_t P_{LLM}(y_t|\bm{y}_{<t})\\
&=\sum_t \left[\bm{w}_i^{\top}\bm{u}_t^*+\log(\sum_{j=1}^{|\mathcal{V}|}\exp(\bm{w}_j^{\top}\bm{z}_t+\bm{b}_j))-\log(\sum_{j=1}^{|\mathcal{V}|}\exp(\bm{w}_j^{\top}(\bm{z}_t+\bm{u}_t)+\bm{b}_j))\right].\label{kllog}
\end{align}
Based on the above computation,
we first generate one response $\mathbf{y}$ from $\prod_t P_{AISP}(y_t|\bm{y}_{<t})$ for each prompt $\mathbf{x}$,
and compute \req{kllog}. Then, the results are averaged over $\{\mathbf{x}\}_{i=1}^{D}$.
Similar computations were performed for ARGS and RE-Control.
Note that this computation is not applicable for BoN 
because it is hard to define the next token distribution for BoN.

\subsection{Additional tasks}\label{sec:othertask}
To investigate the effectiveness of AISP in various tasks, 
we compare AISP with BoN on Alpaca-Eval~\citep{Li_AlpacaEval_An_Automatic_2023}, GSM8K~\citep{cobbe2021gsm8k}, HumanEval~\citep{chen2021codex}, and TruthfulQA~\citep{lin2022truthfulqa}. 
For the last two tasks, we use the lm-evaluation-harness codes~\citep{eval-harness}.
We limit token length of responses to 128. 
To evaluate the performance of AISP for generation of long sequences,
we also evaluate it on HumanEval by setting the token length to 1024.
The hyper-parameters are the same with the evaluation on SHP with Llama3-8B and UltraRM/Eurus:
we set $n=32$, $\kappa=32$, and $N=1024$, and the LLM is llama3-8B.
We used UltraRM except for GSM8K and the evaluation of the long token length (1024) on HumanEval.
Since the original paper of Eurus~\citep{yuan2024advancing} have shown effectiveness on GSM8K, we used Eurus for GSM8K. 
Additionally, since the long length generation uses large memory,
we used Eurus when we set the token length to 1024.
Tables~\ref{Table:Alpaca}-\ref{Table:truthqa} show that AISP is effective on these tasks.
\begin{table*}[bpt]
    \caption{Alpaca-Eval 2.0}
    \label{Table:Alpaca}
    \centering
    \begin{tabular}[tb]{cccc}\toprule
&Length controlled winrate&Win rate&Standard error\\\midrule
AISP&5.64&2.86&0.59\\
BoN&3.95&2.24&0.52\\\bottomrule
    \end{tabular}
\end{table*}
   \begin{table*}[tb]
 \begin{minipage}{0.24\textwidth}
    \caption{GSM8K (8 shot)}
    \label{Table:gsm8k}
    \centering
    {\scriptsize
    \begin{tabular}[tb]{@{}cccc@{}}\toprule
   & Acc\\\midrule
AISP&67.5\\
BoN&66.0\\ \bottomrule
    \end{tabular}
    }
\end{minipage}
    \centering 
    \begin{minipage}{0.24\textwidth}
    \caption{HumanEval}
    \label{Table:humaneval}
    \centering
    {\scriptsize
    \begin{tabular}[tb]{@{}cc@{}}\toprule
 & Pass@1\\\midrule
AISP& 41.4\\
BoN&34.1\\\bottomrule
    \end{tabular}
    }
\end{minipage}
    \begin{minipage}{0.24\textwidth}
    \caption{HumanEval with the token length of 1024}
    \label{Table:humaneval}
    \centering
    {\scriptsize
    \begin{tabular}[tb]{@{}cc@{}}\toprule
 & Pass@1\\\midrule
AISP& 43.9\\
BoN&39.0\\\bottomrule
    \end{tabular}
    }
\end{minipage}
    \centering 
\begin{minipage}{0.24\textwidth}
    \caption{TruthfulQA}
    \label{Table:truthqa}
    {\scriptsize	
    \begin{tabular}[tb]{@{}c@{}cc@{}}\toprule
&BLEU acc&ROUGE1 acc\\\midrule
AISP&0.426&0.479\\
BoN&0.424&0.458\\\bottomrule
    \end{tabular}}
\end{minipage}
\end{table*}
\section*{Broader Impacts}
This paper presents work whose goal is to advance the field of LLM and its applications. 
Malicious users may be able to increase the aggressiveness of LLMs by using reward models for inappropriate purposes.
However, we believe that the risk of such misuse is common across machine learning technologies in general, 
and one of our proposal is not particularly significant.
\section*{Limitations}
AISP approximates the optimal distribution $\mathbb{Q}^*$ of pre-logits by using a Gaussian proposal distribution.
If the optimal distribution is far from Gaussian, the sample efficiency deteriorates. 
Additionally, the Gaussian does not hold fully temporal correlation in pre-logit spaces.
Nevertheless, the importance sampling asymptotically converges to the first moment of the optimal distribution,
and the experimental results demonstrate that this approximation works well in practice.
Though the MPPI literature has explored several extensions to address multimodal distributions~\citep{lambert2021stein} and temporally correlated perturbations~\citep{lee2025time},
we adopt a simple Gaussian proposal as an initial step toward test-time alignment. 
Exploring more expressive proposal families is an interesting direction for future work.


\end{document}